\newtheorem{theorem}{Theorem}
\newcommand{\cmark}{\ding{51}}%
\newcommand{\xmark}{\ding{55}}%
\newcommand{\mbb}[1]{\mathbb{#1}}
\newcommand{\mrm}[1]{\mathrm{#1}}
\newcommand{\mcal}[1]{\mathcal{#1}}
\newcommand{\p}{\partial}
\begin{document}

\title{Learning by Planning: Language-Guided Global Image Editing}

\author{Jing Shi$^{1}$\quad Ning Xu$^2$\quad Yihang Xu$^1$\quad Trung Bui$^2$\quad  Franck Dernoncourt$^2$\quad Chenliang Xu$^1$\\
$^1$University of Rochester \quad\quad\quad\quad $^2$Adobe Research\\
$^1${\tt\small \{j.shi,chenliang.xu\}@rochester.edu} \quad 
$^1${\tt\small yxu74@u.rochester.edu} \quad 
$^2${\tt\small \{nxu,bui,dernonco\}@adobe.com}}

\maketitle
\pagestyle{empty}
\thispagestyle{empty}

\begin{abstract}
Recently, language-guided global image editing draws increasing attention with growing application potentials.
However, previous GAN-based methods are not only confined to domain-specific, low-resolution data but also lacking in interpretability.
To overcome the collective difficulties, we develop a \emph{text-to-operation} model to map the vague editing language request into a series of editing operations, e.g., change contrast, brightness, and saturation. Each operation is interpretable and differentiable.
Furthermore, the only supervision in the task is the target image, which is insufficient for a stable training of sequential decisions. Hence, we propose a novel operation planning algorithm to generate possible editing sequences from the target image as pseudo ground truth.
Comparison experiments on the newly collected MA5k-Req dataset and GIER dataset show the advantages of our methods. Code is available at \url{https://jshi31.github.io/T2ONet}.
\end{abstract}
\vspace{-3mm}

\section{Introduction}
\label{sec:intro}
Image editing is ubiquitous in our daily life, especially when posting photos on social media such as Instagram or Facebook. 
However, editing images using professional software like PhotoShop requires background knowledge for image processing and is time-consuming for the novices who want to quickly edit the image following their intention and post to show around.
Furthermore, as phones and tablets becoming users' major mobile terminal, people prefer to take and edit photos on mobile devices, making it even more troublesome to edit and select regions on the small screen. 
Hence, automatic image editing guided by the user's voice input (\eg~Siri, Cortana) can significantly alleviate such problems. 
We research global image editing via language: given a source image and a language editing request, generate a new image transformed under this request, as firstly proposed in \cite{wang2018learning}.
%
Such a task is challenging because the model has to not only understand the language but also edit the image with high fidelity. 
Rule-based methods~\cite{manuvinakurike2018conversational,manuvinakurike2018edit} transfer the language request into sentence templates and further map the templates into a sequence of executable editing operations.
However, they require additional language annotations and suffer from unspecific editing requests. 
\cite{shi2020benchmark} directly maps the language to operations with the capability to accept the vague editing request, yet still need the operation annotation for training.
A more prevalent track is the GAN-based method~\cite{wang2018learning}, which models the visual and textual information by inserting the image and language features into a neural network generator that directly outputs the edited image. 
However, GAN-based models lack the interpretability about how an image was edited through a sequence of common editing operations (\eg~tone, brightness). Thus, they fail to allow users to modify the editing results interactively.
Moreover, GANs struggle with high-resolution images and is data-hungry.


\begin{figure}[t] 
  \centering\includegraphics[width=1\columnwidth]{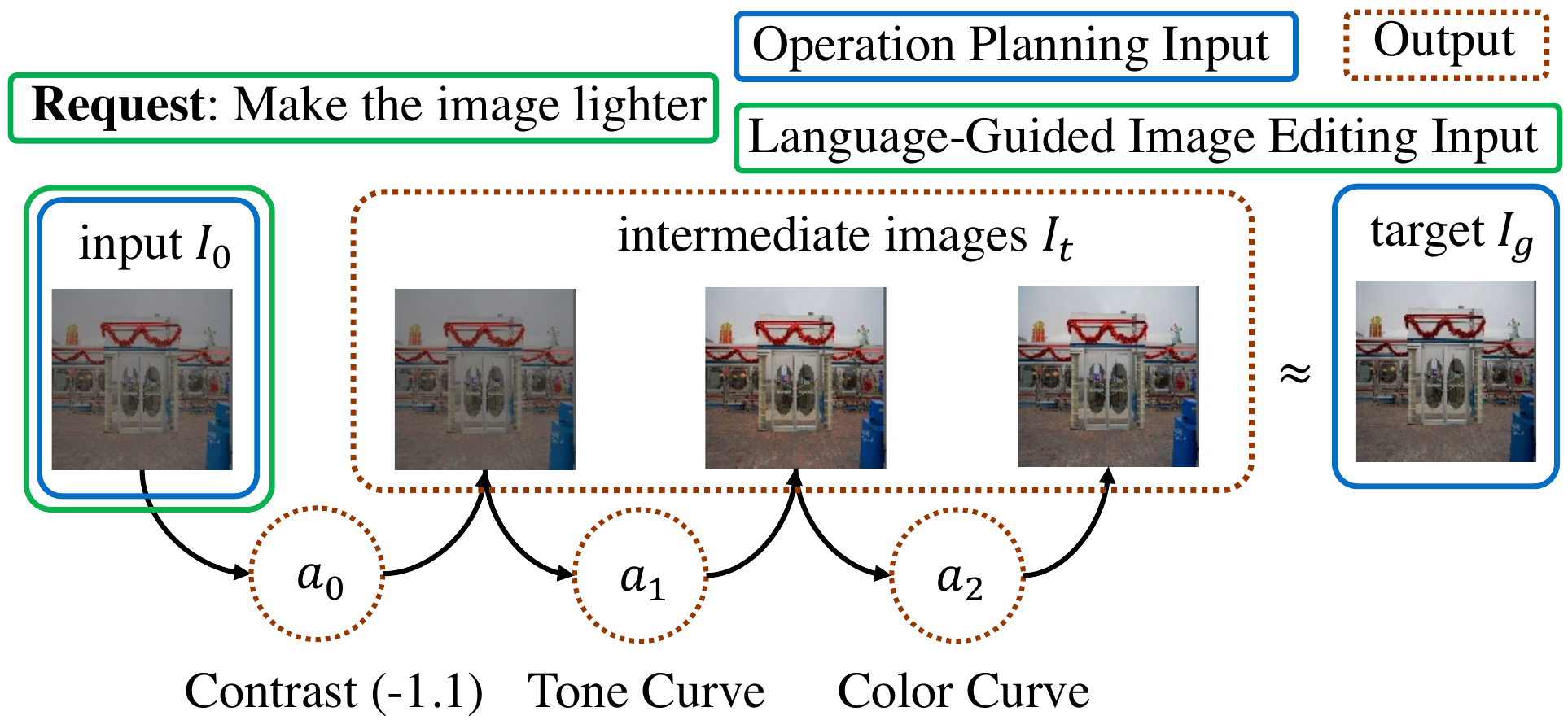} 
  \caption{Language-Guided Global Image Editing: given the input image $I_0$ and the request, we predict a sequence of actions $a_t$ to edit the image progressively with a series of intermediate images $I_t$ generated. And the final edited image is our output, which should accord with the request.
  Operation Planning: the input image $I_0$ and target image $I_g$ are given, and we plan a sequence of action to make the final edited image reach the target image $I_g$.}
  \label{fig:planning}
  \vspace{-3mm}
\end{figure} 

To provide an interpretable yet practical method for language-guided global image editing, in this paper, we propose a Text-to-Operation Network~(T2ONet). 
The network sequentially selects the best operations from a set of predefined everyday editing operations to edit the image progressively according to the language's comprehension and the visual editing feedback. 
As the operations are resolution-independent, such method will not deteriorate the image resolution.
Fig.~\ref{fig:planning} shows the process of mimicking human experts for professional photo editing and opens the possibility for human-computer interactions in future work.

One crucial difficulty for training our model is the lack of supervision information for editing sequences---we do not have access to intermediate editing operations and their parameters. The only available supervision is the input image's tuple, the target image, and the language editing request.
One possible solution is to train our model by Reinforcement Learning (RL). For example, the model can try different editing sequences and get rewards by comparing the edited images to the target images. However, it is well-known that RL is highly sensitive to hyper-parameters and hard to train when the action space is large (\eg high-dimensional continuous action).
On the other hand, it is demanding yet infeasible to collect annotations for all intermediate operations and their parameters in practice. Therefore, a novel training schema is expected to solve our task. 
To overcome this difficulty, we devise a weakly-supervised method to generate pseudo operation supervision. 
Inspired from the classical forward search planning~\cite{russell2016artificial}, we propose an operation-planning algorithm to search the sequence of operations with their parameters that can transform the input image into the target image, as shown in Fig.~\ref{fig:planning}. 
It works as an inverse engineering method to recover the editing procedure, given only the input and the edited images.
Such searched operations and parameters serve as pseudo supervision for our T2ONet.
Also, as the target image is used as the pixel-level supervision, we prove its equivalence to RL.
Besides, we show the potential of the planning algorithm to be extended to local editing and used to edit a new image directly.

In summary, our contributions are fourfold. 
First, we propose T2ONet to predict interpretable editing operations for language-guided global image editing dynamically.
Second, we create an operation planning algorithm to obtain the operation and parameter sequence from the input and target images, where the planned sequences help train T2ONet effectively.
Third, a large-scale language-guided global image editing dataset MA5k-Req is collected. 
Fourth, we reveal the connection between pixel supervision and RL, demonstrating the superiority of our weakly-supervised method compared with RL and GAN-based methods on AM5k-Req and GIER~\cite{shi2020benchmark} datasets through both quantitative and qualitative experimental results. 


\section{Related Work}
\label{sec:related}

\noindent\textbf{Language-based image editing.} 
Language-based image editing tasks can be categorized into one-turn and multi-turn editing. In one-turn editing, the editing is usually done in one step with a single sentence~\cite{dong2017semantic,nam2018text,mao2019bilinear,li2020manigan}.
Dong~\etal~\cite{dong2017semantic} proposed a GAN-based encoder-decoder structure to address the problem. 
Nam~\etal~\cite{nam2018text} leverage the similar generator structure but use a text-adaptive discriminator to guide the generator in the more detailed word-level signal.
However, both \cite{dong2017semantic,nam2018text} simply use concatenation to fuse the textual and visual modalities. 
Mao~\etal~\cite{mao2019bilinear} proposes the bilinear residual layer to merge two modalities to explore second-order correlation.
Li~\etal~\cite{li2020manigan} further introduces a text-image affine combination module to select text-relevant area for automatic editing and use the detail correction module to refine the attributes and contents.
However, the above works are built on the ``black box'' GAN model and inherit its limitations.
Shi~\etal~\cite{shi2020benchmark} introduces a new language-guided image editing (LDIE) task that edits by using interpretable editing operations, but its training requires the annotation of the operation.

For multi-turn editing, the editing request is given iteratively in a dialogue, and the edit should take place before the next request comes~\cite{el2019tell,cheng2018sequential}. However, only toy datasets are proposed for this task.

Our task belongs to a variant of one-turn editing that focuses on global image editing, which is proposed in~\cite{wang2018learning}, which also uses a GAN-based method by augmenting the image-to-image structure~\cite{isola2017image} with language input.
Different from all the above, our method can edit with complex language and image via understandable editing operations without the need for operation annotations


\noindent\textbf{Image editing with reinforcement learning.}
To enable interpretable editing, \cite{hu2018exposure} introduces a reinforcement learning (RL) framework with known editing operations for automatic image retouching trained from unpaired images.
However, it cannot be controlled by language requests.

\noindent\textbf{Task planning.}
Task planning aims at scheduling a sequence of task-level actions from the initial state to the target state. 
Most related literature focuses on the pre-defined planning domain through symbolic representation~\cite{mcdermott1998pddl,ghallab2016automated,konidaris2018skills}.
Our \textit{operation planning} is reminiscent of task planning\cite{russell2016artificial}.
However, it is hard to use symbolic representation in our case because of high-dimensional states and continuous action space.

\noindent\textbf{Modular networks.}
The modular networks are widely adopted in VQA~\cite{andreas2016neural,hu2017learning,johnson2017inferring,hu2018explainable,yi2018neural,Mao2019NeuroSymbolic} and Visual Grounding~\cite{hu2017modeling,liu2019learning,yu2018mattnet}.
In the VQA task, the question is parsed into a structured program, and each function in the program is a modular network that works specifically for a sub-task. 
The reasoning procedure thus becomes the execution of the program. However, the parser has discrete output, and it is usually trained with program semi-supervision~\cite{hu2017learning,johnson2017inferring} or with only the final supervision in an RL fashion~\cite{Mao2019NeuroSymbolic}.
LDIE task has a similar setting that only the target image is given as supervision, but we facilitate our model training by our planning algorithm.

\begin{figure*}[!tp] 
\centering\includegraphics[width=2\columnwidth]{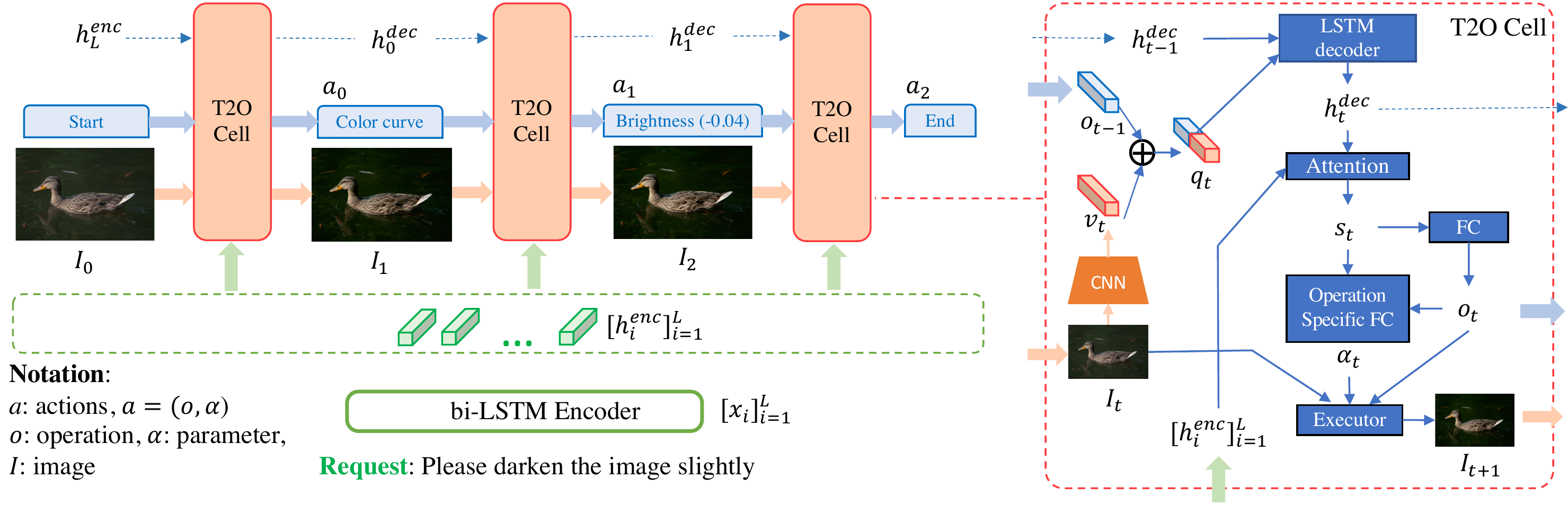} 
\caption{Structure of the T2ONet. An LSTM encoder embeds the request, and the T2O-Cell progressively decodes the input image and request into action and image series.  At each step $t$, the T2O-Cell generates the next action $a_{t}$ and image $I_{t+1}$ based on previous operation $o_{t-1}$, hidden state $h^{dec}_{t-1}$, and image $I_t$.}
\label{fig:seq2seq}
\end{figure*} 

\section{Method}
\label{sec:method}

We achieve the language-guided image editing by mapping the editing request into a sequence of editing operations, conditioned on both input image and language. 
We propose T2ONet to achieve such mapping (Sec.~\ref{sec:seq2seqModel}).
The critical difficulty is that we only have the target image's supervision but no supervision of the sequence.  
To tackle this difficulty, we introduce the idea of planning into the modeling to obtain a feasible operation sequence as the pseudo ground truth (Sec.~\ref{sec:editPlan}).
Finally, we talk about the training process (Sec.~\ref{sec:train}) and the connection to RL (Sec.~\ref{sec:RL}).
\subsection{Problem Formulation}
Starting with an input image $I_0$ and a language request $Q$, the goal is to predict an output image similar to the target image $I_g$.
In contrast to the GAN-based model, which outputs the edited image in one step, we formulate the editing problem through a sequential prediction of 
action sequence $\{a_t\}_{t=0}^{T}$ with length $T+1$ to edit the input image following the language request.
Applying $a_{t}$ to $I_t$ leads to $I_{t+1}$, and the final action $a_T$ is END action that will not produce new image, as shown in Fig.~\ref{fig:seq2seq}. In this way, the model generates a sequence of images $\{I_t\}_{t=1}^T$, where $I_T$ is the final output or target image. 
An action is defined as $a=(o, \alpha)$, where $o$ is the choice of discrete editing operations, and $\alpha$ is the continuous parameter of the operation. 

\subsection{Operation Implementation}
\label{sec:method:operation}

We adopt six operations: \textit{brightness}, \textit{saturation}, \textit{contrast}, \textit{sharpness}, \textit{tone}, and \textit{color}.
Among them, \textit{brightness} and \textit{saturation} is implemented by scaling H and S channels in the HSV space~\cite{gonzales2002digital}, controlled by a single re-scaling parameter. 
\textit{Sharpness} is implemented by augmenting the image with spatial gradients, controlled by a single parameter. 
\textit{Contrast} is also a single-parameter operation and implemented following~\cite{hu2018exposure}. 
\textit{Tone} is controlled by eight parameters that construct a pixel value mapping curve, following~\cite{hu2018exposure}. 
Finally, \textit{color} is similar to \textit{tone} but is implemented with three curves that operate on each of RGB channels, each controlled by eight parameters.
The details of the operation implementation are in the Appx.~\ref{appx:op_detail}.

\subsection{The Text-to-Operation Network (T2ONet)}
\label{sec:seq2seqModel}
We propose the T2ONet to map the language request and the input image to a sequence of actions, which optimizes the joint action distribution, where each new action is predicted based on its past actions and intermediate images:
\begin{align}
  P(\{a_t\}_{t=0}^{T}|I_0, Q) &= P(a_0|I_0, Q) \times \nonumber\\ &\prod_{t=1}^TP(a_t|\{a_\tau\}_{\tau=0}^{t-1}, \{I_\tau\}_{\tau=0}^{t}, Q).
\end{align}
 We denote state $s_t$ as the condensed representation of $\big(\{a_\tau\}_{\tau=0}^{t-1}, \{I_\tau\}_{\tau=0}^{t}, Q\big)$, then the objective is transformed to:  
  $P(\{a_t\}_{t=0}^T|s_0) = \prod_{t=0}^TP(a_t|s_t)$\;.
To realize the policy function  $P(a_t|s_t)$, we adopt an Encoder-Decoder LSTM architecture~\cite{cho2014learning}, shown in Fig.~\ref{fig:seq2seq}.
The request $Q=\{x_i\}_{i=1}^L$ is encoded using a bi-directional LSTM upon the GloVe word embeddings~\cite{pennington2014glove} into a series of hidden states $\{h_i^{enc}\}_{i=1}^L$ and the final cell state $m_L^{enc}$.
Then, an LSTM decoder is represented as $h_{t+1}^{dec}, m_{t+1}^{dec} = f(h_t^{dec}, m_t^{dec}, q_t)$,
where $q_t=\mathrm{concat}(\textrm{Embedding}(o_t); v_t)$. $o_t$, $h_t^{dec}$, and $m_t^{dec}$ are the predicted operation, the hidden state, and the cell state at the $t$-th step, respectively (we omit $m_t^{dec}$ in Fig.~\ref{fig:seq2seq} for simplicity). Similar to word embedding, each operation is embedded into a feature vector through a learnable operation embedding layer. $v_t=\textrm{CNN}(I_t)$ denotes the image embedding via CNN at the $t$-th step. 
 Then, the attention mechanism~\cite{bahdanau2014neural} is applied to better comprehend the language request 
  $\beta_{ti} = \frac{\exp{((h_t^{dec})^\mathrm{T}h_i^{enc})}}{\sum_{i'=1}^L\exp{((h_t^{dec})^\mathrm{T}h_{i'}^{enc})}}$,
  $c_t = \sum_{i=1}\beta_{ti}h_i^{enc}$,
  $s_t = \tanh(W_c[c_t;h_t^{dec}])$.
The state vector $s_t$ is now the mixed feature of past images, operations, and the language request. 
Since the parameter $\alpha$ is dependent on the operation $o$, we further decompose the policy function as 
$P(a_t|s_t)=P(o_t, \alpha_t|s_t)=P(o_t|s_t)P(\alpha_t|o_t, s_t)$,
where $P(o_t|s_t$) is obtained through a Fully-Connected~(FC) layer to predict the operation $o_t$, which is expressed as:
\begin{equation}
  P(o_t|s_t) = \textrm{softmax}(W_os_t + b_o).\label{eqn:op_prob}
\end{equation}
For parameter prediction $P(\alpha_t|o_t, s_t)$, different operations can have different parameter dimensions. Therefore, we create an operation-specific FC layer for each operation to calculate: $\alpha_t = W_\alpha^{(o)} s_t + b_\alpha^{(o)}$, where superscription $(o)$ is the indicator of the specific FC layer for operation $o$. 
Hence, $P(\alpha_t|o_t, s_t)$ is modeled as a Gaussian distribution $\mathcal{N}(\alpha_t; \mu_{\alpha_t}, \sigma_\alpha)$: 
\begin{equation}
  P(\alpha_t|o_t, s_t) = \mathcal{N}(\alpha_t; W_\alpha^{(o_t)} s_t + b_\alpha^{(o_t)}, \sigma_\alpha).
  \label{eqn:param_prob}
\end{equation}
Finally, the executor will apply the operation $o_t$ and its parameter $\alpha_t$ to the image $I_{t}$ to obtain the new image $I_{t+1}$. 
The process from $I_{t}$ to $I_{t+1}$ will repeat until the operation is predicted as the ``END'' token.


\begin{algorithm}[!tp]
  \label{alg:forwardSearch}
\SetAlgoLined
\LinesNumbered
\KwIn{$I_0$, $I_{g}$, max operation step $N$, threshold $\epsilon$, beamsize $B$, operation set $\mathcal{O}$}
$p$=$[I_0]$\\
$\mrm{cost}(I) = ||I - I_g||_1$\\
\For{$t$ in $1:N$}{
  $q \leftarrow [\ ]$\\
  \For{$I \in p$}{
    \For{$o \in \mathcal{O}$}{
      $\alpha^* = \arg\min_\alpha \mrm{cost}(o(I, \alpha))$\\
      $I^* \leftarrow o(I, \alpha^*)$\\
      $q \leftarrow q\cup I^*$\\
    }
  }
  $q\leftarrow \mrm{Sort}(q), \mrm{sort key}=\mrm{cost}(I^*)$ \\
  $p = q[:B]$\\
  \For{$I \in p$}{
    \If{$\mrm{cost}(I) < \epsilon$}{
      Break All Loop
    }
  }
}
$\{o_t\}, \{\alpha_t\}, \{I_t\}\leftarrow \mrm{Backtracking}(p)$

\Return $\{o_t\}, \{\alpha_t\}, \{I_t\}$
 \caption{Operation Planning}
\end{algorithm}

\subsection{Operation Planning}
\label{sec:editPlan}

To provide stronger supervision for training policy function, we introduce the operation planning algorithm that can reverse engineer high-quality action sequences from only the input and target images. 
Concretely, given the input image $I_0$ and the target image $I_g$, plan an action sequence $\{a_t\}_0^T$ to transform $I_0$ into $I_g$. 
This task is similar to the classical planning problem~\cite{ghallab2016automated}, and we solve it with the idea of forward-search. 
Algorithm~\ref{alg:forwardSearch} shows the operation planning process. 
We define the planning model with action $a$, image $I$ as state, and state-transition function $I' = o(I, \alpha)$, where $o$ is the operation. 
The state transition function takes image $I$ and parameter $\alpha$ as input and outputs a new image.
The goal is to make the final image $I_T$ similar to $I_g$ as within an error $\epsilon$, specified by the L1 distance $||I_T-I_g||_1 < \epsilon$. To reduce redundant edits, we restrict each operation to be only used once and limit the maximum edit step to $N$. 

In algorithm~\ref{alg:forwardSearch}, we wrap the goal into a cost function and try to minimize the cost during each step. However, the action $a$ includes both discrete operation $o$ and continuous parameter $\alpha$, which could be high-dimensional with extremely large searching space. To make computing efficient, we only loop over all the discrete operation candidates, but as the operation is chosen, we optimize the parameter to minimize the cost function. Such optimization could significantly reduce the searching space for parameters. Since all operations here are differentiable, the optimization process could be 0th-, 1st-, and 2nd-order, \eg, Nelder-Mead~\cite{nelder1965simplex}, Adam~\cite{kingma2014adam}, and Newton's method, respectively.
At each step $t$, the algorithm visits every image in the image candidate list of beam size $B$, and for each image, the algorithm enumerates the operation list of size $|\mathcal{O}|$. 
Since it has at most $N$ steps, the maximum time complexity for operation planning is $O(NB|\mathcal{O}|)$.
In practice, we constraint the planning for unrepeated operations.
Fig.~\ref{fig:vis_plan} shows one trajectory of our planned sequence, as it stops at the second step since the cost is lower than $\epsilon=0.01$.
Different operation sets and orders are studied in Sec.~\ref{sec:ablation}.
We further show two potential extensions of the operation planning algorithm.

\noindent\textbf{Extension1: Planning through a discriminator}.
The cost($I$) is not limited to $||I_T-I_g||_1$, but can be the image quality score yield by a pretrained discriminator $D$ without dependence of the target image.
Then our operation planning can directly edit new images (see Sec.~\ref{sec:discussion} for details).

\noindent\textbf{Extension2: Planning for local editing}.
Although our paper focuses on global editing, the operation planning can be extended to planning local editing by searching the region masks with an additional loop, detailed in Sec.~\ref{sec:discussion}.


\begin{figure}[!tp] 
  \centering\includegraphics[width=\columnwidth]{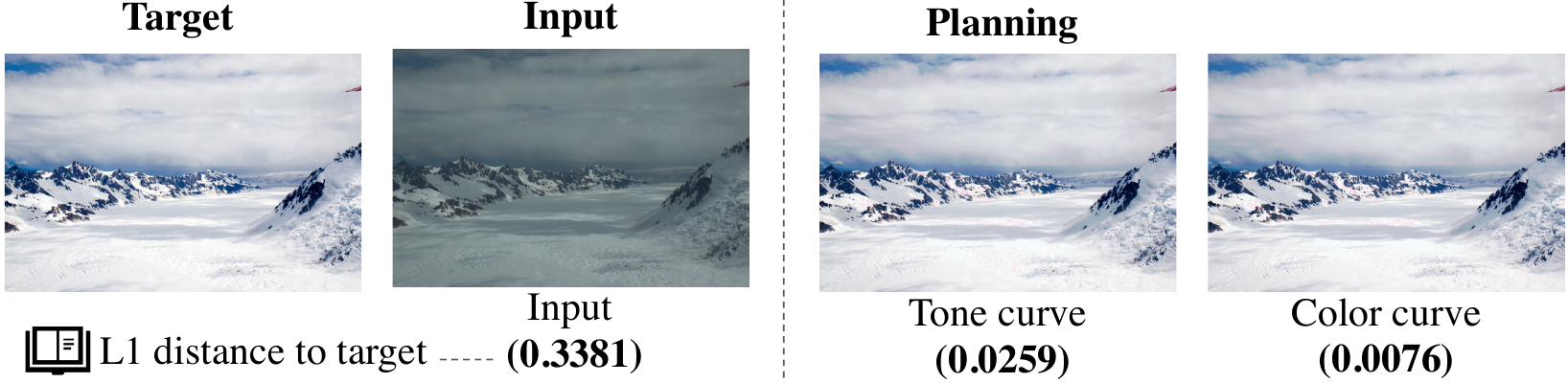} 
  \caption{Visualization of the operation planning trajectory. The number
  The L1 distance is monotonically decreasing and can recover highly similar result to the target.} 
  \label{fig:vis_plan}
\vspace{-4mm}
\end{figure} 

\subsection{Training}
\label{sec:train}
The planning algorithm~\ref{alg:forwardSearch} creates pseudo ground truth operation $\{o_t^*\}_{t=0}^T$  and parameter sequence $\{\alpha_t^*\}_{t=0}^{T-1}$ to supervise our model. The operation is optimized by minimizing the cross-entropy loss (XE): 
\begin{equation}
  \vspace{-2mm}
  \mathcal{L}_o = -\sum_{t=0}^T\log(P(o_t^*|s_t)).
\end{equation}
Maximizing the log-likelihood for Eq.~\ref{eqn:param_prob} equals to applying  MSE loss:
\begin{equation}
  \mathcal{L}_\alpha = \sum_{t=0}^{T-1}||\alpha_t - \alpha_t^*||_2^2.
\end{equation}
Additionally, to utilize the target image supervision, we apply the image loss as final L1 loss as: 
\begin{equation}
  \mathcal{L}_{L1} = ||I_T - I_g||_1.
\end{equation}
The ablation study (Appx.~\ref{appx:diff_img_loss}) proves the L1 loss is critical for better performance.
Although teacher forcing technique is a common training strategy in sequence-to-sequence model~\cite{sutskever2014sequence}, where the target token is passed as the next input to the decoder, teacher forcing does not work for $\mathcal{L}_{L1}$ since the intermediate pseudo-GT input blocks the gradient.
Therefore we train $\mathcal{L}_{L1}$ in a non-teacher forcing fashion and $\mcal{L}_o, \mcal{L_\alpha}$ in the teacher forcing fashion, alternatively.
Our final loss is $\mathcal{L}=\mathcal{L}_o + \mathcal{L}_{\alpha} + \mathcal{L}_{L1}$.

\noindent\textbf{More request-sensitive output}.
The model is expected to be request-sensitive: produce diversified edits following different requests, rather than simply improve the image quality regardless of the requests.
To improve the request-sensitivity, we propose to sample the parameter $\alpha_t$ from $\mcal{N}(\alpha_t;\mu_{\alpha_t}, \sigma_\alpha)$ in Eq.~(\ref{eqn:param_prob}) to train the image loss.
In our default setting, $\sigma_\alpha=0$, \ie $\alpha_t=\mu_{\alpha_t}$. 
Our motivation is that sampling the parameter will produce stochastic editing results, preventing the model from falling into one same editing pattern or shortcuts regardless of the language.
Also, there exist multiple reasonable edits for one request, so the $\mcal{L}_{L1}$ still guarantees the stochastic output images to be reasonable.
We observe that increasing $\sigma_\alpha$ leads to higher request-sensitivity (see Sec.~\ref{sec:ablation}).
In fact, the next section will discuss the above training scheme for image loss with a close relation with RL.

\subsection{Equivalence of Image Loss and DPG}
\label{sec:RL}

To bridge the equivalence, we adapt an RL baseline from \cite{hu2018exposure}. Due to space limitations, the detailed introduction of the baseline is in Appx.~\ref{appx:rl_baseline}, here we focus on the training for parameter $\alpha$ with RL and its connection to image loss.
Let the reward be $r_t = \mrm{cost}(I_{t-1}) - \mrm{cost}(I_{t})$, policy $\pi_o=P(o|s)$ in Eq.~(\ref{eqn:op_prob}), $\pi_\alpha=\mcal{N}(\alpha;\mu_{\alpha}, \sigma_\alpha)$, 
the accumulated reward defined as $G_t = \sum_{\tau=0}^{T-t}\gamma^{\tau}r_{t+\tau}$ ($\gamma=1$ as \cite{hu2018exposure}), the goal is to optimize the objective $J(\pi)=\mbb E_{(I_0, Q)\sim P(\mcal D), o\sim\pi_o \alpha\sim \pi_\alpha } G_1$. 
The continuous policy $\pi_\alpha$ is optimized by Deterministic Policy Gradient algorithm (DPG)~\cite{silver2014deterministic}.
Different from the common setting \cite{silver2014deterministic,hu2018exposure} where the Q function is approximated with a neural network to make it differentiable to action, we approximate $Q$ as $G$ since our $G_{t+1}$ is already differentiable to $\alpha_t$, resulting in the DPG for each episode as

\vspace{-2mm}
\begin{equation}
  \nabla_{\theta_\alpha}J(\pi) = \underset{\mbb E}
  \sum_{t=0}^{T-1}\nabla_{\alpha_t}G_{t+1} \nabla_{\theta_\alpha}\alpha_t. 
  \label{eqn:alpha_grad}
\end{equation} 
Now, we show the equivalence between image loss and DPG using the following theorem:
\begin{theorem}
  The DPG for $\alpha$ in Eq.~(\ref{eqn:alpha_grad}) can be rewritten as 
  \label{thm:dpg}
\begin{equation}
  \nabla_{\theta_\alpha}J(\pi) = -
 \frac{\p\mrm{cost}(I_T)}{\p \theta_\alpha}.\label{eqn:equality}
\end{equation}
\end{theorem}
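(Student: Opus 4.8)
The plan is to collapse the deterministic policy gradient in Eq.~(\ref{eqn:alpha_grad}) to a pure chain-rule identity, by first simplifying the return $G_{t+1}$ and then recognizing the sum over $t$ as a total derivative of the terminal cost.

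First I would substitute the reward $r_t = \mathrm{cost}(I_{t-1}) - \mathrm{cost}(I_t)$ and the discount $\gamma = 1$ into the accumulated return. Writing $G_{t+1} = \sum_{\tau=0}^{T-t-1} r_{t+1+\tau} = \sum_{k=t+1}^{T}\big(\mathrm{cost}(I_{k-1}) - \mathrm{cost}(I_k)\big)$, the sum telescopes to
\begin{equation}
  G_{t+1} = \mathrm{cost}(I_t) - \mathrm{cost}(I_T).
\end{equation}
This step is routine: it trades the per-step rewards for the difference of two cost evaluations.

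The key observation comes next. Since the action $a_t = (o_t, \alpha_t)$ produces $I_{t+1} = o_t(I_t, \alpha_t)$, the image $I_t$ is fixed \emph{before} $\alpha_t$ is applied and hence does not depend on $\alpha_t$; only $I_{t+1}, \dots, I_T$ do. Thus $\nabla_{\alpha_t}\mathrm{cost}(I_t) = 0$ and $\nabla_{\alpha_t} G_{t+1} = -\nabla_{\alpha_t}\mathrm{cost}(I_T)$. Substituting into the per-episode gradient yields
\begin{equation}
  \sum_{t=0}^{T-1}\nabla_{\alpha_t} G_{t+1}\,\nabla_{\theta_\alpha}\alpha_t = -\sum_{t=0}^{T-1}\nabla_{\alpha_t}\mathrm{cost}(I_T)\,\nabla_{\theta_\alpha}\alpha_t.
\end{equation}
Finally I would recognize the right-hand side as a total derivative. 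The parameters $\theta_\alpha$ influence $\mathrm{cost}(I_T)$ only through the collection $\{\alpha_t\}_{t=0}^{T-1}$, since each $\alpha_t$ is an output of the operation-specific FC layer parameterized by $\theta_\alpha$. The multivariate chain rule then gives exactly $\frac{\partial \mathrm{cost}(I_T)}{\partial \theta_\alpha} = \sum_{t=0}^{T-1}\nabla_{\alpha_t}\mathrm{cost}(I_T)\,\nabla_{\theta_\alpha}\alpha_t$, so the per-episode gradient equals $-\frac{\partial \mathrm{cost}(I_T)}{\partial \theta_\alpha}$. Taking the expectation over $(I_0, Q) \sim P(\mathcal{D})$ on both sides (the expectation over $\alpha$ being trivial under the deterministic policy) recovers Eq.~(\ref{eqn:equality}).

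The main obstacle is the bookkeeping in the last step: one must verify that the summation index $t$ in the DPG exactly reconstructs the total derivative through every shared occurrence of $\theta_\alpha$, with no gradient path double-counted or dropped. In particular I would want to confirm that the only channel through which $\theta_\alpha$ reaches $\mathrm{cost}(I_T)$ is the chain $\theta_\alpha \to \alpha_t \to I_{t+1} \to \cdots \to I_T$ (and not, e.g., through the discrete operation choice $o_t$, which is governed by $\theta_o$ rather than $\theta_\alpha$). The telescoping of the return and the independence of $I_t$ from $\alpha_t$ are the two conceptual levers; the chain-rule accounting is where care is genuinely needed.
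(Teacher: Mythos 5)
Your proof is correct and follows essentially the same route as the paper's: telescoping the return under $r_t = \mathrm{cost}(I_{t-1}) - \mathrm{cost}(I_t)$ with $\gamma=1$, using the independence of $I_t$ from $\alpha_t$ to get $\nabla_{\alpha_t}G_{t+1} = -\nabla_{\alpha_t}\mathrm{cost}(I_T)$, and then recognizing the sum $\sum_{t=0}^{T-1}\nabla_{\alpha_t}\mathrm{cost}(I_T)\,\nabla_{\theta_\alpha}\alpha_t$ as the multivariate chain-rule expansion of $\partial\,\mathrm{cost}(I_T)/\partial\theta_\alpha$. Your closing remark about verifying that $\theta_\alpha$ reaches $\mathrm{cost}(I_T)$ only through the $\alpha_t$'s (and not through $o_t$, which is governed by $\theta_o$) is a point of care the paper leaves implicit, but it does not change the argument.
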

\begin{proof}
  See Appx.~\ref{appx:eq_proof}
\end{proof}
Theorem~\ref{thm:dpg} provides a new perspective that minimizing the $\mcal{L}_{L1}$ for the final image in T2ONet is actually equivalent to optimizing the model with deterministic policy gradient at each step.

\section{Experiments}
\subsection{Datasets}

\noindent\textbf{MA5k-Req.} To push the research edge forward, we create a large-scale language-guided global image editing dataset. We annotate language editing requests based on MIT-Adobe 5k dataset~\cite{bychkovsky2011learning}, where each source image has five different edits by five Photoshop experts, leading to a new dataset called MA5k-Req.
4,950 unique source images are selected, and each of the five edits is annotated with one language request, leading to 24,750 source-target-language triplets. 
See Appx.~\ref{appx:data_collect} for data collection details.
We split the dataset as 17,325~(70\%) for training, 2,475~(10\%) for validation, and 4,950~(20\%) for testing.
After filtering the words occurring less than 2 times, the vocabulary size is 918.
Note that \cite{wang2018learning} also similarly creates a dataset with 1884 triplets for this task, but unfortunately, it has not been released and is 10 times smaller than ours.

\noindent\textbf{GIER.} Recently, GIER dataset~\cite{shi2020benchmark} is introduced with both global and local editing. We only select the global editing samples, leading to a total of 4,721 unique image pairs, where each is annotated with around 5 language requests, resulting in 23,171 triplets. we splits them as 18,571 (80\%) for training, 2,404 (10\%) for validation, and 2,196 (10\%) for testing.
After filtering the words occurring less than 3 times, the vocabulary size is 2,102.




\newcommand{\ra}[1]{\renewcommand{\arraystretch}{#1}}
\begin{table*}[t]\centering
\ra{1.2}
\scalebox{0.82}{
\begin{tabular}{@{}lrrrrrcrrrrr@{}}
\toprule
& \multicolumn{5}{c}{MA5k-Req} & \phantom{ab}& \multicolumn{5}{c}{GIER}\\
\cmidrule{2-6} \cmidrule{8-12} 
& L1 $\downarrow$ & SSIM$\uparrow$ & FID$\downarrow$ & $\sigma_{\times 10^2}$$\uparrow$ & User$\uparrow$ && L1$\downarrow$ & SSIM$\uparrow$ & FID$\downarrow$ &  $\sigma_{\times 10^2}$$\uparrow$ & User$\uparrow$\\ 
\midrule
Target &  -   &   -    &   -     & -  & 3.5053  &&  -     &   -     &   -   & -  & 3.6331 \\
Input & 0.1190 & 0.7992 & 12.3714 & - & - && 0.1079 & 0.8048 & 49.6229 & - & - \\
Bilinear GAN~\cite{mao2019bilinear} & 0.1559 & 0.4988 & 102.1330 & 0.8031 & 1.9468 && 0.1918 & 0.4395 & 214.7331 &  1.2164 & 1.7988\\
Pix2pixAug~\cite{wang2018learning} & 0.0928 &  0.7938 & 14.5538 & 0.5401 & 3.0957 && 0.1255 & 0.7293 & 74.7761 & \textbf{1.2251} & 2.5148\\
SISGAN~\cite{dong2017semantic} & 0.0979 & 0.7938 & 30.9877 & 0.1659 & 2.8032 && 0.1180 & 0.7300 & 140.1495 & 0.0198 & 2.1243\\
TAGAN~\cite{nam2018text} & 0.1335 & 0.5429 & 43.9463 & 1.5552 & 2.5691 && 0.1202 & 0.5777 & 112.4168 & 0.6073 & 2.4970\\ 
GeNeVa~\cite{el2019tell} & 0.0933 & 0.7772 & 33.7366 & 0.6091 & 3.0851 &&  0.1093 & 0.7492 & 87.0128 & 0.5732& 2.7278 \\
RL & 0.1007 & 0.8283 & 7.4896 & \textbf{1.6175} & 3.1968 &&  0.2286 & 0.3832 & 132.1785 & 0.3978& 1.8462 \\
T2ONet & \textbf{0.0784} & \textbf{0.8459} & \textbf{6.7571} & 0.7190 & \textbf{3.3830}  && \textbf{0.0997} & \textbf{0.8160} & \textbf{49.2049} & 0.6226 & \textbf{2.8994}\\
\bottomrule
\end{tabular}}
\caption{Quantitative results on two test sets. $\sigma_{\times 10^2}$ means that the image variance has been scaled up 100 times.}
\label{tab:mainResult}
\vspace{-4mm}
\end{table*}

\subsection{Evaluation Metrics}
Similar to the L2 distance used in ~\cite{wang2018learning}, we use L1 distance, Structural Similarity Index~(SSIM), and Fréchet Inception Distance~(FID) for evaluation. 
L1 distance directly measures the averaged pixel absolute difference between the generated image and ground truth image as the pixel range is normalized to 0-1.
SSIM measures image similarity through luminance, contrast, and structure.
FID measures the Fréchet distance between two Gaussians fitted to feature representations of the Inception network over the generated image set and ground truth image set. 
To further exam the model's language-sensitivity, we propose the image variance $\sigma$ to measure the diversity of the generated image conditioned on different requests. Similar to~\cite{li2019diverse}, we apply 10 different language requests (see Appx.~\ref{appx:lang}) to the same input image and output 10 different images. 
Then we compute the variance over the 10 images of all pixels and take the average overall spatial locations and color channels. 
Finally, we take the average of the average variance over the entire test set. 
The variance can only measure the diversity of generated images in different language conditions but cannot directly tell the editing quality. 
So we still resort to user study to further measure the editing quality.

\noindent\textbf{User study setting.} We randomly select 250 samples from the two datasets, respectively, with each sample evaluated twice.
The user will see the input image and request and blindly evaluate the images predicted by different methods as well as the target image.  
Each user rates a score from 1 (worst) to 5 (best) based on the edited image quality (fidelity and aesthetics) and whether the edit accords with the request. 
We collect the user rating through Amazon Mechanical Turk (AMT), involving 42 workers.

\subsection{Implementation Details}
For operation planning, we set the maximum step $N=6$, tolerance $\epsilon = 0.01$, and constraint that one operation is only used once.
We adopt Nelder-Mead~\cite{nelder1965simplex} for parameter optimization.
The model is optimized by Adam~\cite{kingma2014adam} with learning rate 0.001, $\beta_1=0.9$, $\beta_2=0.999$. 
More details are elaborated in Appx.~\ref{appx:imple_detail}.

\subsection{Main Results}
\label{sec:experiment_main_result}
\noindent\textbf{Operation planning}. 
The set 5 in Tab.~\ref{tab:dif_op_list} shows the averaged L1 distance of the planning result is 0.0136, which is around only 3.5-pixel value error towards target images, with pixel range 0-255.
Fig.~\ref{fig:vis_plan} shows the operation planning can achieve the visually indistinguishable output compared with the target.
So we are confident to use the planned action sequence as a good pseudo ground truth.

\noindent\textbf{Comparison methods}.
\begin{itemize}[noitemsep, topsep=0pt]
\item \textit{Input}: the evaluation between input and target image.
\item \textit{Bilinear GAN}~\cite{mao2019bilinear}, \textit{SISGAN}~\cite{dong2017semantic}, \textit{TAGAN}~\cite{nam2018text}: these three methods are trained  by learning the mapping between the caption and image without image pairs. Since there is not image caption in our task but the paired image and request, we drop the procedure of image-caption matching learning but adapt them with the L1 loss between input and target images.
\item \textit{Pix2pixAug}~\cite{wang2018learning}: the pix2pix model~\cite{isola2017image} augmented with language used in \cite{wang2018learning}.
\item \textit{GeNeVa}~\cite{el2019tell}: a GAN-based dialogue guided image editing method. We use it for single-step generation.
\item \textit{RL}: out RL baseline introduced in Sec.~\ref{sec:RL}.
\end{itemize}
We also compared with ManiGAN~\cite{li2020manigan}, but its output is very blurred as it is not designed for our task, and its network lacks the skip connection structure to keep the resolution. 
So we just show its visualization in Appx.~\ref{appx:vis_comp}.

\begin{figure*}[!tp] 
  \centering\includegraphics[width=2\columnwidth]{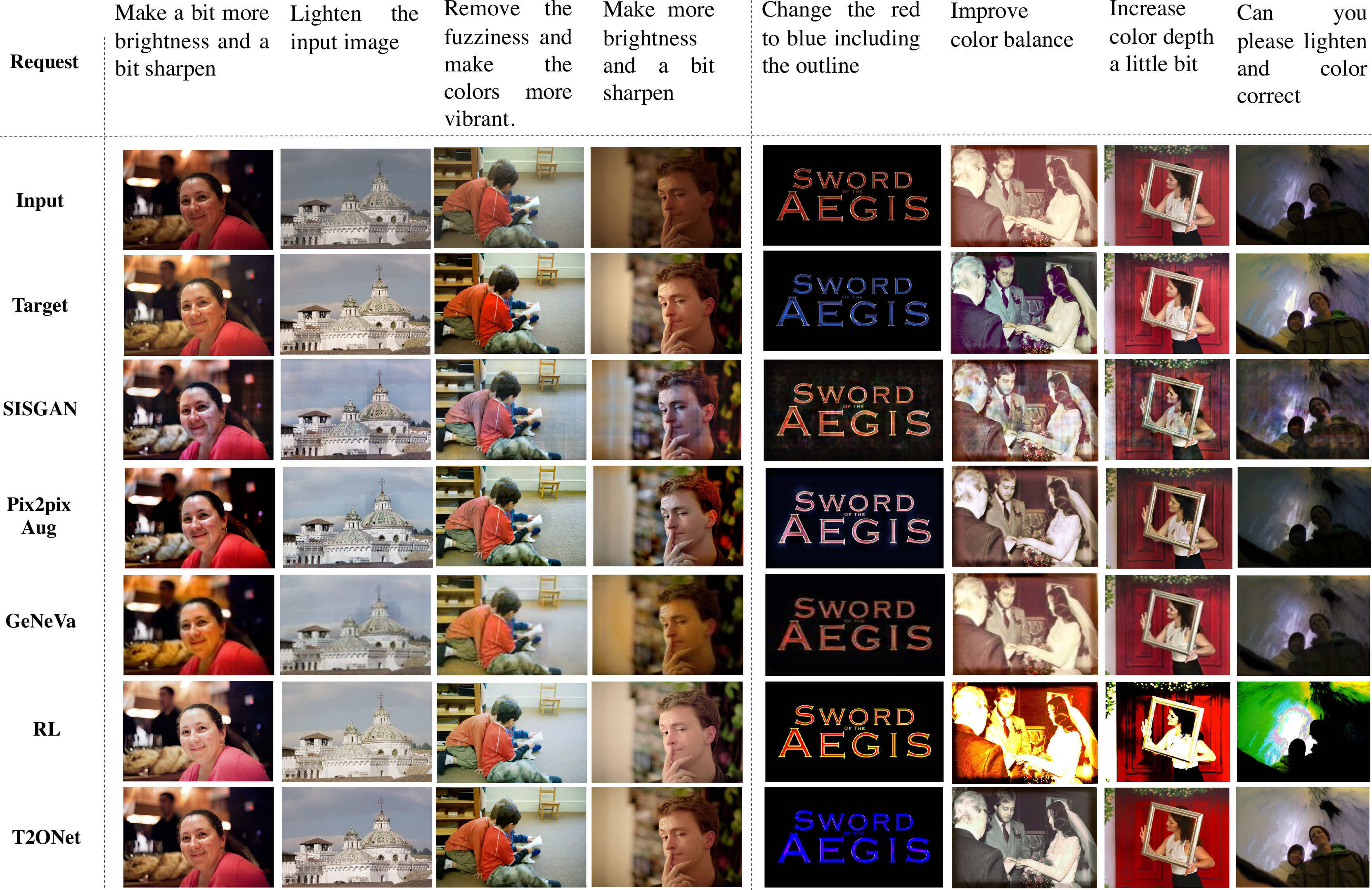} 
  \caption{Visualization for comparison of our method T2ONet with other methods on MA5k-Req (left) and GIER (right).}
  \label{fig:vis_comp_exp}
 \vspace{-4mm}
\end{figure*} 

\begin{figure}[!tp] 
  \centering\includegraphics[width=0.8\columnwidth]{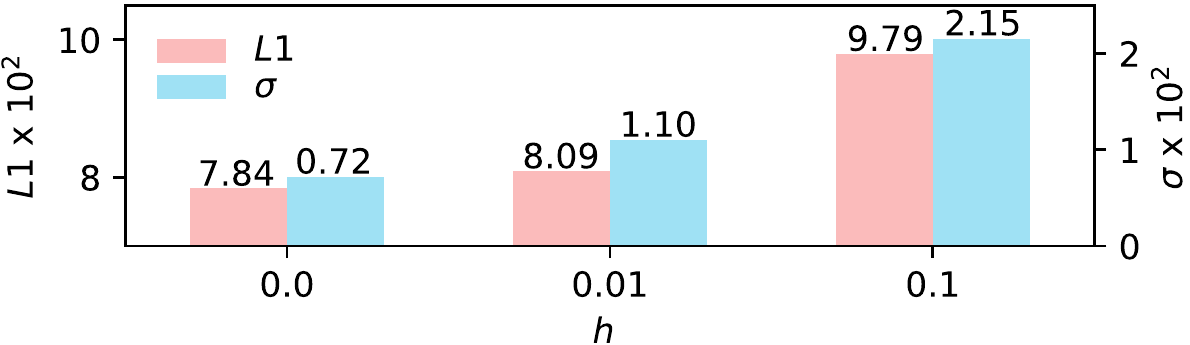} 
  \caption{L1 and variance trade-off by training with different parameter sample variance on the MA5k-Req test set.} 
  \label{fig:tradeoff}
  \vspace{-1mm}
\end{figure} 

\noindent\textbf{Result analysis}.
The qualitative and quantitative comparison are in Fig.~\ref{fig:vis_comp_exp} and Tab.~\ref{tab:mainResult}, respectively. 
However, the results of BilinearGAN, TAGAN are bad, and their visual results have been omitted. For interested readers please refer to Appx.~\ref{appx:vis_comp}.
Fig.~\ref{fig:vis_comp_exp} shows that
SISGAN has obvious artifacts, Pix2pixAug, and GeNeVa have less salient editing than ours, the RL tends to be overexposed in Fivek-Req and does not work well on GIER. 
Our T2ONet generates more aesthetics and realistic images, which are most similar to targets.
The much worse performance of BilinearGAN, TAGAN, SISGAN might because their task is different from ours and their model ability is limited for complex images.
Tab.~\ref{tab:mainResult} demonstrates that our T2ONet achieves the best performance on visual similarity metrics L1, SSIM, and FID, but not the $\sigma$.
Firstly, $\sigma$ can measure the editing diversity, as in Fig.~\ref{fig:vis_variance}; however, the $\sigma$ and visual similarity metric are usually a trade-off, as shown in Sec.~\ref{sec:ablation}.
So although RL has the highest $\sigma$ under MA5k-Req, it sacrifices L1 much more, and its visual results indicate that it tends to be overexposed.
Second, the $\sigma$ might be dominated by noisy random artifacts, \eg, BilinearGAN in Fig.~\ref{fig:vis_comp_exp}.
Therefore, we resort to user ratings for best judgment, which indicates our method is the most perceptually welcomed.

\noindent\textbf{Dataset Comparison}. Tab.~\ref{tab:mainResult} also reflects the difference between the two datasets.
Since GIER has a smaller data size and contains more complex editing requests, GIER is more challenging than MA5k-Req, which is verified by the fact that the gap of the user rating between target and T2ONet is much larger on GIER than on MA5k-Req.\\
\noindent\textbf{Advantage over GAN}.
GAN-based methods also suffer from high-resolution input and can be jeopardized by artifacts.
However, our T2ONet is resolution-independent without artifacts (see Appx.~\ref{appx:res_independ}).\\
\noindent\textbf{Advantage over RL}.
With the more challenging GIER dataset, it makes RL harder to explore the positive-rewarded actions and fail.
However, T2ONet still works well on GIER with the help of the pseudo action ground truth from operation planning.
We further show that the operation planning can help RL in Appx.~\label{appx:plan_help_rl}.

\begin{figure}[!tp] 
  \centering\includegraphics[width=\columnwidth]{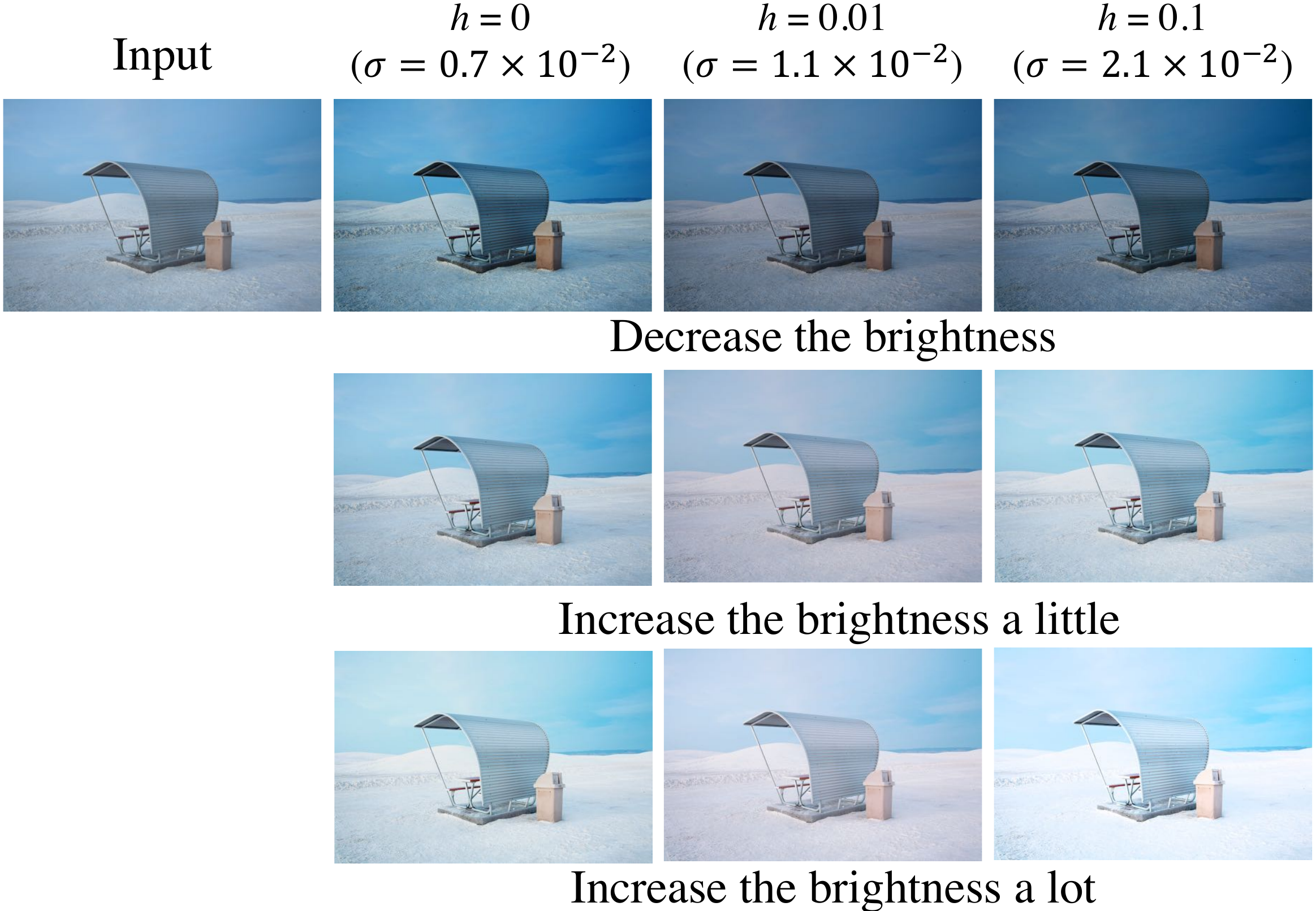} 
  \caption{The same input edited with different language by models trained with different $h$. Image variance $\sigma$ for the whole test data is also shown as a reference. The model trained with larger $h$ has more diversified output.} 
  \label{fig:vis_variance}
  \vspace{-6mm}
\end{figure} 

\begin{table}[t]
  \centering
  \scalebox{0.76}{
  \begin{tabular}{c|ccccc|c}
  \toprule
  operation set & 1 & 2 & 3 & 4 & 5 & input\\
  \midrule
  planning (train) & 0.0521 & 0.0358 & 0.0198 & 0.0197 & \textbf{0.0136} & 0.1202\\
  T2ONet (test) & 0.1315 & 0.0857 & 0.0832 & 0.0853 & \textbf{0.0770} & 0.1190\\
  \bottomrule
  \end{tabular}}
  \caption{L1 distance to target image over different operation lists and operation orders on MIT-Adobe 5k dataset. Set 1 is planned over only brightness operation. Set 2 is planned over single parameter operations including brightness, contrast, saturation, sharpness. Set 3 is planned over the full operation list with the operation order fixed. Set 4 is planned over full operations with epsilon-greedy search. Set 5 is planned over the full operation list. Inputs represent the input image.}
  \label{tab:dif_op_list}
  \vspace{-4mm}
\end{table}

\begin{figure}[!tp] 
    \centering\includegraphics[width=1\columnwidth]{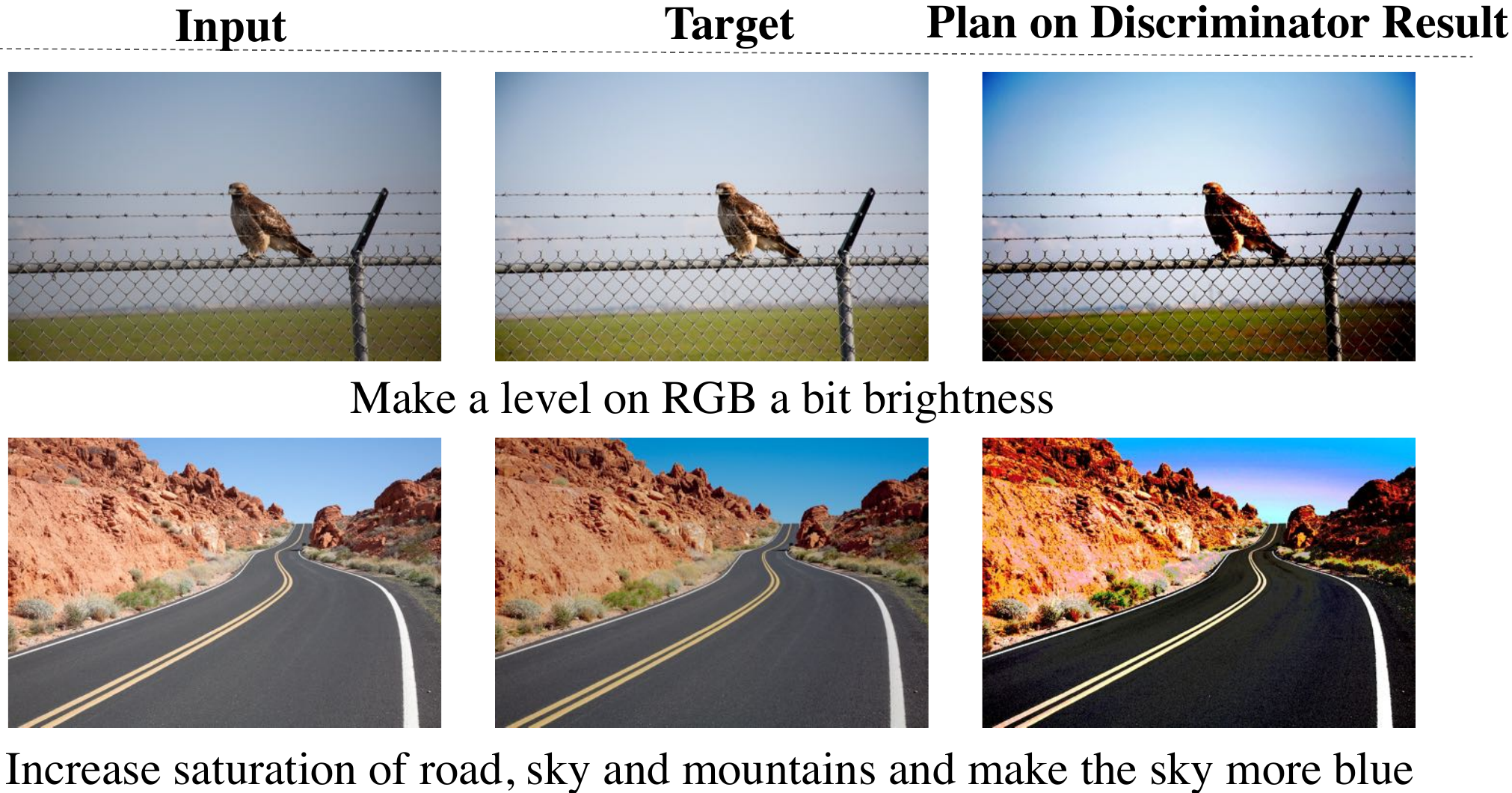} 
    \caption{Planning through a discriminator.} 
    \label{fig:plan_disc}
    \vspace{-3mm}
\end{figure} 

\subsection{Ablation Study}
\label{sec:ablation}
Due to space limit, the ablation study of different network structures is moved to Appx.~\ref{appx:historical_effect} and the investigation of alternative image loss is in Appx.~\ref{appx:diff_img_loss}.\\
\noindent\textbf{Trade-off between L1 and variance}.
We sample operation parameter $\alpha_t$ from $\mcal{N}(\alpha_t;\mu_{\alpha_t}, \sigma_\alpha)$ while training the L1 loss.
We set $\sigma_\alpha=Rh/3$, where $R$ is the half range of the parameter, $h$ is the gaussian width controller.
Interestingly, the L1 and variance of T2ONet can be traded-off by adjusting $\sigma_\alpha$. 
Fig.~\ref{fig:tradeoff} manifests that the image variance can be enlarged by increasing $h$, but in turn, resulting in higher L1. The detailed result table is in Appx.~\ref{appx:trade_l1_var}.
Moreover, Fig.~\ref{fig:vis_variance} shows that while all of the models are sensitive to requests, the model trained with larger $h$ produces more diversified results.

\noindent\textbf{Planning with different operation lists, operation orders and planning methods.}
According to both the planning and T2ONet editing performance in Tab.~\ref{tab:dif_op_list}, set 1, 2, 5  shows that the performance substantially increases as the operation candidate list becomes larger.
Planning with different single operation and different max step $N$ is studied in Appx.~\ref{appx:single_list}.
Set 3 and 5 compare the difference between fixed and our searched operation order. 
It shows the searched order is slightly better than the fixed one for planning (might because the improvement space for planning is limited), but it will bring a larger improvement for T2ONet. 
Set 4 and 5 indicate that the original version is better than alternative $\epsilon$-greedy policy~\cite{sutton2018reinforcement}, detailed in Appx.~\ref{appx:other_plan}.


\begin{figure}[!tp] 
    \centering\includegraphics[width=0.9\columnwidth]{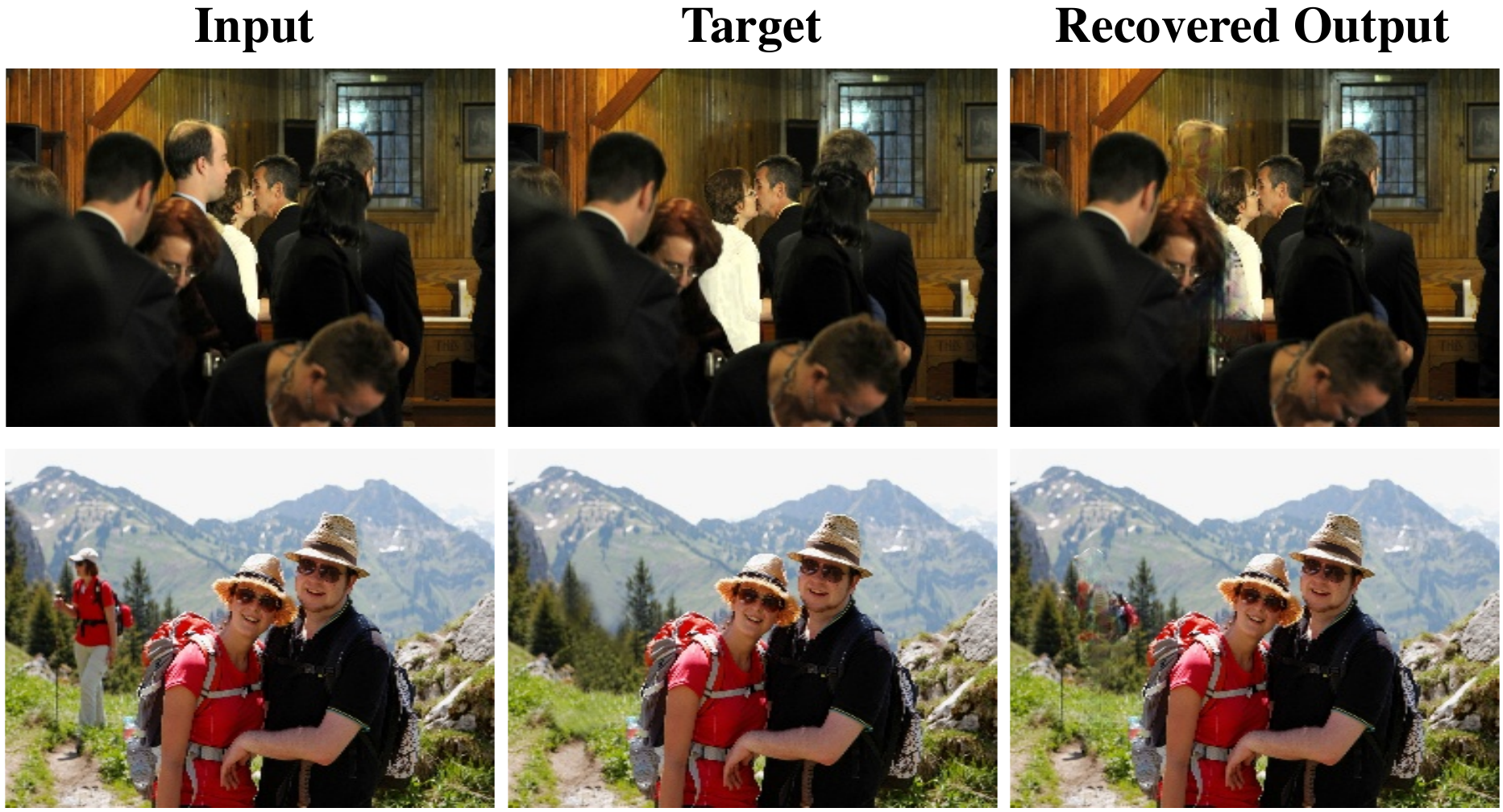} 
    \caption{Planning on local editing.} 
  \label{fig:plan_local}
    \vspace{-4mm}
\end{figure}


\subsection{Extensions of Planning Algorithm}
\label{sec:discussion}
\noindent\textbf{Planning through a discriminator.}
We leverage a discriminator $D$ that takes as input a pair of images and a request and outputs a score indicating the editing quality. Such $D$ is pretrained with adversarial loss on T2ONet (see Appx.~\ref{appx:diff_img_loss} for detail). We define the new cost function as $\mrm{cost}(I)=1 - D(I_0, I, Q)$, and apply it to Alg.~\ref{alg:forwardSearch}. 
Interestingly, such planning can still produce some visually pleasing results, shown in Fig.~\ref{fig:plan_disc}.
Although its quantitative results are worse than our default training performance, using a pretrained image-quality discriminator to edit an image brings a new perspective for image editing. Another advantage is its flexibility such that the same discriminator can be applied on a different set of operations while previous methods require retraining.



\noindent\textbf{Planning for local edit.}
Our operation planning can generalize to local editing (\eg ``remove the man in the red shirt on the left"). Given the input and target image, we can use the pretrained panoptic segmentation network~\cite{xiong2019upsnet} to get a set of segments in the input image. With our planning algorithm (adding a new loop for segments, adding inpainting as one operation), we can get the pseudo ground truth, including the inpainting operation and its edited area, which can train a local editing network like~\cite{shi2020benchmark}.
Its full algorithm is described in the Appx.~\ref{appx:local_edit}.


\section{Conclusion}
We present an operation planning algorithm to reverse-engineer the editing through input image and target image, and can even generalize to local editing.
A Text-to-Operation editing model supervised by the pseudo operation sequence is proposed to achieve a language-driven image editing task.
We proved the equivalence of the image loss and the deterministic policy gradient.
Comparison experiments manifest our method is superior to other GAN-based and RL counterparts on both MA5k-Req and GIER Images.
The ablation study further investigates the trade-off between L1 and request-sensitivity and analyzes the factors that affect operation planning performance.
Finally, we extend the operation planning to a discriminator-based planning and local edit.

\noindent\textbf{Acknowledgments}\quad This work was supported in part by an Adobe research gift, and NSF 1813709, 1741472 and 1909912. The article solely reflects the opinions and conclusions of its authors but not the funding agents.

\clearpage

\appendix


\noindent {\Large  \textbf{Appendix}}
\section{Ablation Study}

\subsection{Different Image Loss} \label{appx:diff_img_loss}
Inspired by the GAN-based image-to-image translation~\cite{wang2018high}, we also try to apply adversarial loss $\mcal{L}_{adv}$ using discriminator $D$, whose structure is shown in Fig.~\ref{fig:disc_structure}. 
The adversarial loss is expressed as:
\begin{align}
  \mathcal{L}_{adv} =& -\mathbb{E}_{(I_0, I_g)}[\log(D(I_0, I_g, Q))] \nonumber\\
  & - \mathbb{E}_{(I_0, I_T)}[\log(1 - D((I_0, I_T, Q)))]. \label{eqn:discriminator}
\end{align}
Denote the whole parameter for the T2ONet as $\Theta_G$ and discriminator as $\Theta_D$,  the objective for adversarial loss is $\min_{\Theta_G}(\max_{\Theta_D}\mcal L_{adv})$.
The effect of L1 and adversarial loss is shown in Tab.~\ref{tab:seq2seq}.
We observe that adding the image level loss can significantly improve T2ONet, because the operation supervision is trained in teacher forcing fashion, which easily accumulates error at each step.
A supervision at the final image help correct the error at the final image.
And without image supervision, the variance drops significantly, indicating the model has a very similar output for different requests.
Moreover, the L1 loss is better than the adversarial loss. It might because adversarial loss is good at generating sharper and more detailed images~\cite{goodfellow2016nips,ledig2017photo}, but our operation will not reduce the detail/texture of the image, so the adversarial loss may not help as much as L1 loss, which pushes the similarity of the generated image to target image in a more direct way.
And the combination of L1 and adversarial loss is still weaker than solely L1 loss in general, probably because we directly use $\mcal L = \mcal L_{L1} + \mcal{L}_{adv}$ and didn't fine-tune the balance weight.
Hence, to facilitate our model design, we purely use L1 loss as the image loss.
The visual comparison of different final image losses is shown in Fig.~\ref{fig:vis_ablation} and we find that without L1 loss or changing L1 to adversarial loss, the visual appearance is less similar to target and less appearing. 

\subsection{Trade-off between L1 and Variance} \label{appx:trade_l1_var}
Tab.~\ref{tab:h_tradeoff} shows the complete evaluation for the trade-off between L1 and variance.

\subsection{Effect of historical operations, images and attention for T2ONet} \label{appx:historical_effect}
Our standard T2O Cell takes in the previous operation and image. The comparison with only either of them is shown in Tab.~\ref{tab:seq2seq}, indicating that just image or operation performs no better than their combination. 
One exception is the variance for the only operation is better than combined, which means without the historical image as the feedback, the editing will be less controlled and be more diversified.
Also, the attention mechanism help improve the performance according to Tab.~\ref{tab:seq2seq}.

\begin{figure}[!tp] 
  \centering\includegraphics[width=1\columnwidth]{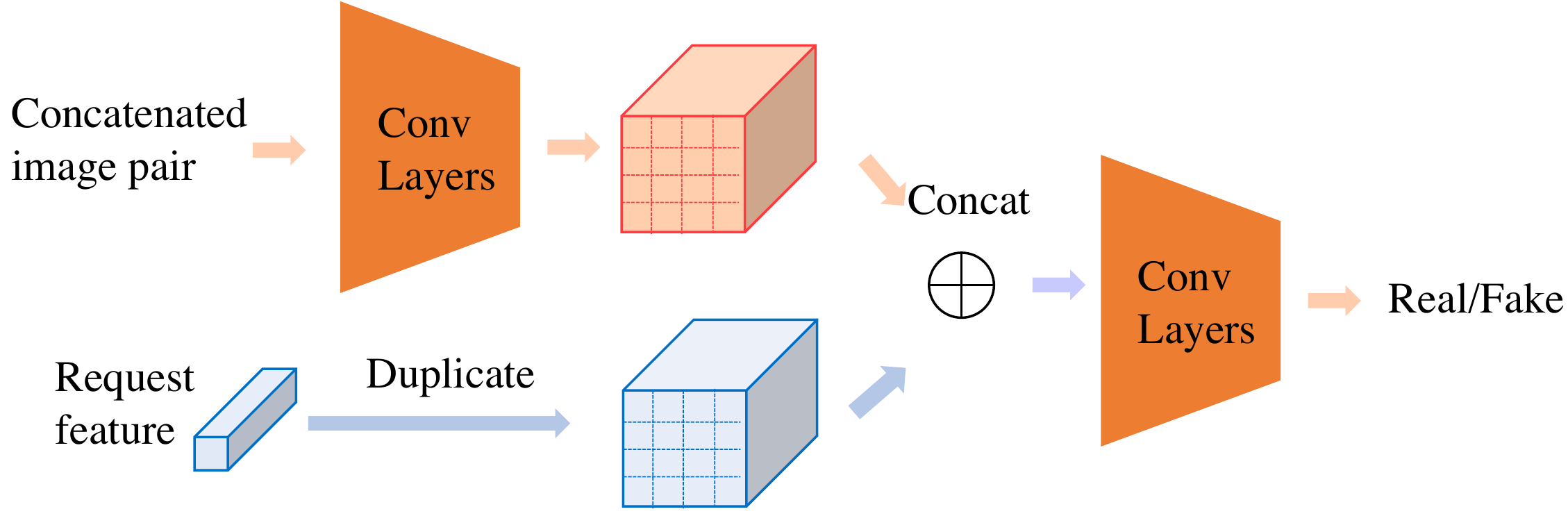} 
  \caption{Structure of the discriminator used for adversarial loss.} 
  \label{fig:disc_structure}
\end{figure} 
\begin{table}[t]
  \centering
  \ra{1.2}
  \scalebox{0.9}{
  \begin{tabular}{ccrrrr}
  \toprule
   L1 & Adv & L1$\downarrow$ & SSIM $\uparrow$ & FID $\downarrow$ & $\sigma_{\times 10^2}$$\uparrow$ \\
  \midrule
  \xmark & \xmark & 0.0949 & 0.8300 & 8.2482 & 0.0532 \\
  \cmark & \xmark & \textbf{0.0784} & 0.8459 & \textbf{6.7571} & \textbf{0.7190} \\
  \xmark & \cmark & 0.0901 & 0.8031 & 9.4600 & 0.5825\\
  \cmark & \cmark & 0.0801 & \textbf{0.8464} & 6.9436 & 0.5671 \\
  \bottomrule
  \end{tabular}}
  \caption{Ablation study of different losses and network structures on the MA5k-Req test set. L1, Adv represent L1 and adversarial loss, respectively.}
  \label{tab:seq2seq}
\end{table}

\begin{table}
  \centering
  \begin{tabular}{crrrr}
  \toprule
   $h$ & L1$\downarrow$ & SSIM $\uparrow$ & FID $\downarrow$ & $\sigma_{\times 10^2}$$\uparrow$ \\
  \midrule
  0 &  \textbf{0.0784} & \textbf{0.8459} & \textbf{6.7571} & 0.7190 \\
  0.01& 0.0809 & 0.8487 & 7.2789 & 1.1008 \\
  0.1 & 0.0979 & 0.8090 & 8.8763 & \textbf{2.1482}\\
  \bottomrule
  \end{tabular}
  \caption{L1 and variance trade-off by training with different parameter sampling variance (reflected by $h$) on the MA5k-Req test set.}
  \label{tab:h_tradeoff}
\end{table}

\begin{table}[t]
  \centering
  \ra{1.2}
  \scalebox{0.9}{
  \begin{tabular}{crrrr}
  \toprule
    & L1$\downarrow$ & SSIM $\uparrow$ & FID $\downarrow$ & $\sigma_{\times 10^2}$$\uparrow$ \\
  \midrule
  w./o. image & 0.0863 & 0.8332 & 7.7869 & \textbf{1.1950}\\
  w./o. operation & 0.0837 & 0.8424 & 7.6559 & 0.3257\\
  w./o. attention & 0.1088 & 0.8087  & 8.4587 & 0.8872\\
  full model & \textbf{0.0784} & \textbf{0.8459} & \textbf{6.7571} & 0.7190\\
  \bottomrule
  \end{tabular}}
  \caption{W./o. image, operation, and attention indicate the T2OCell without using in the intermediate image, operation, and attention on MA5k-Req test set.}
  \label{tab:seq2seq}
\end{table}

\subsection{Comparison of other possible planning method} \label{appx:other_plan}
Since our operation planning is based on greedy best-search, such greedy method does not guarantee an optimal solution. 
Inspired by the $\epsilon$-greedy policy~\cite{sutton2018reinforcement} applied in RL, we further compare a variant called $\epsilon$-greedy operation planning to incorporate randomness to further approach optimal.
The only difference is that there is $5\%$ possibility that the operation is randomly selected, rather than the top choice.

\begin{table*}[]
  \centering
  \ra{1.2}
  \begin{tabular}{@{}ccccccccc@{}}
  \toprule
  Operation & Brightness & Contrast & Saturation & Sharpness & Tone & Color &\phantom{ab}& Input \\
  \midrule
  planning (train) & 0.0521 & 0.0859 & 0.1037 &0.1163 & \textbf{0.0277} & \textbf{0.0260} && 0.1202\\
  T2ONet (test) & 0.1315 & 0.1178 & 0.1163 & 0.1256 & \textbf{0.1006} & \textbf{0.1129} && 0.1190 \\
  \bottomrule
  \end{tabular}
  \caption{L1 distance to target image over different \textbf{single operations} on MA5k-Req dataset. Input represents the distance of the input image to the target image. Planning results are on the training set, and T2ONet results are on the testing set.}
  \label{tab:abla_single_operation}
\end{table*}

\begin{table*}[]
  \centering
  \ra{1.2}
  \begin{tabular}{@{}ccccccccc@{}}
  \toprule
  Max Step & 1 & 2 & 3 & 4 & 5 & 6 & \phantom{ab}&Input \\
  \midrule
  Planning (train) & 0.0256 & 0.0145 & 0.0139 & 0.0137 & 0.0136 & 0.0136 && 0.1202 \\ 
  \bottomrule
  \end{tabular}
  \caption{L1 distance to the target image with different \textbf{maximum editing steps} on MA5k-Req dataset. Input represents the distance of the input image to the target image.}
  \label{tab:abla_max_step}
  \vspace{-3mm}
\end{table*}

\subsection{Effect of different single operation lists and different maximum operation steps}\label{appx:single_list}
We further study the comparison of only applying single operation. 
Table~\ref{tab:abla_single_operation} presents the editing results of planning and T2ONet using only single operation.
The most effective operations are ``tone'' and ``color'', because they have 8 and 24 parameters, respectively, and thus have stronger editing ability than single-parameter operations . The results also suggest that the single-step editing results are worse than our proposed multi-step editing results. 
Also, we track the effect of different maximum operation steps, the planning results are shown in Tab.~\ref{tab:abla_max_step}.
From the perspective of planning, maximum steps 4, 5, 6 do not have much difference. This suggests that we could reduce the editing step or find the best trade-off between the editing effect and editing time complexity in the future.

\section{Reinforcement Learning}
\subsection{Details of RL baseline} \label{appx:rl_baseline}

Now we reformulate the editing problem into a partial observed Markov decision process and introduce an RL baseline. 
Following the symbol notations and the problem formulation in the main paper that the editing process is a sequential action decision problem, we augmented with reward $r_{t+1}$ for action $a_t$, the problem can be reformulated into a Markov decision process and solved by RL.
Following ~\cite{hu2018exposure}, the reward is set to indicate the incremental image quality, which is adapted as the reduction of the image cost
\begin{equation} \label{eqn:reward}
  r_t = \mrm{cost}(I_{t-1}) - \mrm{cost}(I_{t}),
\end{equation}
where $\mrm{cost}(I)$ can be any image loss and is set as $||I-I_g||_1$ in our experiment. 
Since the reward for the ``END'' action is hard to design (the reward in Eq.~(\ref{eqn:reward}) is zero for ``END'' action), we set every episode fixed $T$ steps ($T=5$ as \cite{hu2018exposure}).
The actions are sampled from policy $\pi=(\pi_o, \pi_\alpha)$, where $\pi_o=P(o|s), \pi_\alpha=P(\alpha|o,s)$, leading to the trajectory $\Pi=\{s_0, a_0, s_1, r_1, ..., s_T, r_T\}$. 
The $P(o|s), P(\alpha|o,s)$ is computed the same way as T2ONet.
With the accumulated reward defined as $G_t = \sum_{\tau=0}^{T-t}\gamma^{\tau}r_{t+\tau}$ ($\gamma=1$ as \cite{hu2018exposure}), the goal is to optimize the objective $J(\pi)=\mbb E_{(I_0, Q)\sim P(\mcal D), \Pi\sim \pi} G_1$, where $p(\mcal{D})$ is the distribution of the dataset.
Denoting $\theta_o$ and $\theta_\alpha$ as the respective model parameter involving in the computation of $o$ and $\alpha$, the discrete policy $\pi_o$ is optimized via REINFORCE~\cite{williams1992simple}:
\begin{equation}
  \nabla_{\theta_o}J(\pi)=\underset{\substack{(I_0, Q)\sim P(\mcal D)\\ o_t\sim \pi_o, \alpha_t\sim \pi_\alpha}}{\mbb E} \sum_{t=0}^{T-1} G_{t+1}\nabla_{\theta_o}\log\pi_o(o_t).
\end{equation}
For the continuous policy $\pi_\alpha$, we resort to DPG~\cite{silver2014deterministic}. Different from the common setting \cite{silver2014deterministic,hu2018exposure} where the Q function is approximated with a neural network to make it differentiable to action, we approximate $Q$ as $G$ since our $G_{t+1}$ is already differentiable to $\alpha_t$, resulting in the DPG as 
\begin{equation}
  \nabla_{\theta_\alpha}J(\pi) = \underset{\substack{(I_0, Q)\sim P(\mcal D)\\ \alpha_t\sim \pi_\alpha}}{\mbb E}
  \sum_{t=0}^{T-1}\nabla_{\alpha_t}G_{t+1} \nabla_{\theta_\alpha}\alpha_t. 
  \label{eqn:alpha_grad}
\end{equation} 

In short, the major difference of our RL optimization from \cite{hu2018exposure} is that we replace Q function approximated by neural network in \cite{hu2018exposure} with $G$ in both discrete and continuous policies, avoiding the complexity for training the Q network. 
The full algorithm for our RL baseline is in Appx.~\ref{sec:rl_alg}.

In our experiments, the sampling for $o$ is based on $\pi_o$ with $\epsilon$-greedy policy where the $\epsilon=0.05$.
The sampling for $\alpha$ is based on $\pi_\alpha$ where the gaussian width controller $h=0.1$. 
The other implementation details are the same with our main experiments.

\subsection{Equivalence of image loss and DPG}\label{appx:eq_proof}
Now, we show the equivalence between image loss and DPG using the following theorem:
\begin{theorem}
  The DPG for $\alpha$ in Eq.~(\ref{eqn:alpha_grad}) can be rewrite as 
\begin{equation}
  \nabla_{\theta_\alpha}J(\pi) = -\underset{\substack{(I_0, Q)\sim P(\mcal D)\\ \alpha\sim \pi_\alpha}}{\mbb E}
 \frac{\p\mrm{cost}(I_T)}{\p \theta_\alpha}.\label{eqn:equality}
\end{equation}
\end{theorem}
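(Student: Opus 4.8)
The plan is to exploit the telescoping structure of the return together with the causal ordering of the editing pipeline, and then to recognize the resulting expression as a single application of the multivariate chain rule. In other words, I would show that the DPG update and the straight-through gradient of the final cost are two names for the same backpropagated quantity.

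First I would unfold the accumulated reward. Since $\gamma=1$ and $r_s = \mrm{cost}(I_{s-1}) - \mrm{cost}(I_s)$, the sum $G_{t+1} = \sum_{s=t+1}^{T} r_s$ telescopes to
\begin{equation}
  G_{t+1} = \mrm{cost}(I_t) - \mrm{cost}(I_T),
\end{equation}
collapsing the whole chain of intermediate rewards into the difference between the cost at step $t$ and the cost of the final image. Next I would invoke causality: the parameter $\alpha_t$ is applied to $I_t$ to produce $I_{t+1}=o_t(I_t,\alpha_t)$, so $I_t$ is fixed before $\alpha_t$ is used and is independent of it. Hence $\nabla_{\alpha_t}\mrm{cost}(I_t)=0$ and therefore $\nabla_{\alpha_t}G_{t+1} = -\nabla_{\alpha_t}\mrm{cost}(I_T)$. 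Substituting this into the DPG expression (Eq.~\ref{eqn:alpha_grad}) removes every intermediate cost and leaves
\begin{equation}
  \nabla_{\theta_\alpha}J(\pi) = -\underset{\substack{(I_0, Q)\sim P(\mcal D)\\ \alpha\sim \pi_\alpha}}{\mbb E}\sum_{t=0}^{T-1}\nabla_{\alpha_t}\mrm{cost}(I_T)\,\nabla_{\theta_\alpha}\alpha_t.
\end{equation}

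Finally I would recognize the inner sum as the chain-rule expansion of a total derivative. The only way $\theta_\alpha$ enters the forward computation of $\mrm{cost}(I_T)$ is through the parameter heads that emit $\alpha_0,\dots,\alpha_{T-1}$; summing over all steps the sensitivity of the final cost to $\alpha_t$ times the sensitivity of $\alpha_t$ to $\theta_\alpha$ is precisely $\p\,\mrm{cost}(I_T)/\p\theta_\alpha$, which yields Eq.~\ref{eqn:equality}.

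The main obstacle is making this last step airtight: I must be careful about which derivatives are ``direct'' and which are ``total,'' because $\nabla_{\alpha_t}\mrm{cost}(I_T)$ already propagates through the downstream images $I_{t+1},\dots,I_T$ and through the states $s_{t+1},\dots$ on which later actions depend, whereas $\nabla_{\theta_\alpha}\alpha_t$ is the per-head derivative at fixed $s_t$. The clean resolution is to view the unrolled editing pipeline as a computational graph in which each appearance of $\theta_\alpha$ is an independent copy: the total derivative of $\mrm{cost}(I_T)$ is then the sum over copies of the corresponding partials, which matches the DPG sum term by term with no path double-counted or omitted. I would state this accounting explicitly, since it is the only place where the identity could silently break.
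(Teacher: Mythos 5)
Your proposal is correct and follows essentially the same route as the paper's own proof: telescope $G_{t+1}$ to $\mathrm{cost}(I_t)-\mathrm{cost}(I_T)$, use the independence of $I_t$ from $\alpha_t$ to get $\nabla_{\alpha_t}G_{t+1}=-\nabla_{\alpha_t}\mathrm{cost}(I_T)$, and recognize $\sum_{t}\nabla_{\alpha_t}\mathrm{cost}(I_T)\,\nabla_{\theta_\alpha}\alpha_t$ as the chain-rule (backprop-through-time) expansion of $\partial\,\mathrm{cost}(I_T)/\partial\theta_\alpha$. Your closing discussion of per-copy partials versus total derivatives makes explicit an accounting step the paper leaves implicit, but it is a refinement of the same argument, not a different one.
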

\begin{proof}
Substituting Eq.~(\ref{eqn:reward}) and $\gamma=1$, $G_t$ can be simplified as 
\begin{align}
  G_t &= \sum_{\tau=0}^{T-t} \big (\mrm{cost}(I_{t+\tau-1}) - \mrm{cost}(I_{t+\tau})\big ) \nonumber\\
   &= \mrm{cost}(I_{t-1}) - \mrm{cost}(I_{T}).
\end{align}
Since $I_t$ is independent of $\alpha_t$, we have
\begin{equation}
\nabla_{\alpha_t} G_{t+1}= \frac{\p (\mrm{cost}(I_{t}) - \mrm{cost}(I_{T}))}{\p \alpha_t}=-\frac{\p \mrm{cost}(I_T)}{\p \alpha_t}.
\end{equation}
Therefore, the summation in Eq.~(\ref{eqn:alpha_grad}) can be expressed as
\begin{equation}
  \sum_{t=0}^{T-1} \nabla_{\alpha_t}G_{t+1} \nabla_{\theta_\alpha}\alpha_t = -\sum_{t=0}^T\frac{\p\mrm{cost}(I_T)}{\p \alpha_t}\frac{\p \alpha_t}{\p \theta_\alpha} = -\frac{\p \mrm{cost}(I_T)}{\p \theta_\alpha}
  \label{eqn:sum_equality}
\end{equation}
According to Eq.~(\ref{eqn:sum_equality}), Eq.~(\ref{eqn:alpha_grad}) is equivalent to Eq.~(\ref{eqn:equality}).
\end{proof}

\subsection{Algorithm for RL baseline}
\label{sec:rl_alg}
The full algorithm for our RL baseline is shown in Alg.~\ref{alg:rl}.
\begin{algorithm}[]
  \label{alg:rl}
\SetAlgoLined
\LinesNumbered
\KwIn{Training dataset $\mcal D$; learning rate $\beta$; max operation step $N=5$}
\For{episode in $1:M$}{
  Sample $I_0, Q, I_g$ from $\mcal D$;\\
  Sample one editing episode from $\pi_o, \pi_\alpha$:\\
  $\{ I_0, a_0, I_1, r_1, a_2, I_2, \ldots, I_T, r_T \}$;\\
  $\Delta_{\theta_o}J= \sum_{t=0}^{T-1} G_{t+1}\nabla_{\theta_o}\log\pi_o(o_t)$;\\
  $\theta_o \leftarrow \theta_o + \beta\Delta_{\theta_o}J$;\\
  $\Delta_{\theta_\alpha}J = -\frac{\p \mrm{cost}(I_T)}{\p \theta_\alpha}$;\hfill \textcolor{ForestGreen}{\# $\mrm{cost}(I) = ||I - I_g||_1$}\\
  $\theta_\alpha \leftarrow \theta_\alpha + \beta\Delta_{\theta_\alpha}J$;\\
}
\Return $(\theta_o, \theta_\alpha)$
 \caption{RL}
\end{algorithm}

\subsection{Can operation planning benefit RL?} \label{appx:plan_help_rl}
Since the success of RL relies on the exploration of the action space, can the action sequence obtained from the operation planning algorithm help RL to better explore the action space, especially the continuous action?
To answer this question, similar to \cite{yi2018neural}, we firstly pretrain the model with the planned operations as supervision (same as T2ONet training loss), then finetune it using RL with only the target image supervision.
The result in Tab.~\ref{tab:rl_finetune} show that the pretraining does not help RL much on MA5k-Req, but significantly benefit RL on GIER.
As GIER has smaller size and more complex editing than FiveK, RL is struggling with the exploration of $\alpha$. The pretrained model can initialize a good exploration and thus the RL can work on GIER. 

\begin{table}
  \centering
  \scalebox{0.85}{
  \begin{tabular}{ccrrrr}
  \toprule
  Dataset & Pretrain & L1$\downarrow$ & SSIM $\uparrow$ & FID $\downarrow$ & $\sigma_{\times 10^2}$$\uparrow$ \\
  \midrule
  MA5k-Req & \xmark & 0.1007 & 0.8283 & 7.4896 & 1.6175 \\
  MA5k-Req & \cmark & 0.0955 & 0.8330 & 7.1413 & 1.4672 \\
  GIER     & \xmark & 0.2286 & 0.3832 & 132.1785 & 0.3978 \\
  GIER     & \cmark & 0.1052 & 0.8075 & 49.4183 & 1.0949\\
  \bottomrule
  \end{tabular}}
  \caption{The RL performance with and without operation-supervised pretrain on two datasets.}
  \label{tab:rl_finetune}
\end{table}

\section{Planning for Local Editing}\label{appx:local_edit}
Our operation planning can generalize to local editing. Given a zero-one image mask $M$, we redesign the image editing function as $I_\textrm{out} = o(I, \alpha)\odot M + I\odot(1-M)$, where $\odot$ is element-wise product; thus only the masked part is edited. 
Given $K$ mask candidates, we can add an inner loop over all $K$ mask candidates to further generate $K$ edited images each time.
In this case, the time complexity goes to $O(NB|\mathcal{O}|K)$.
However, $K$ can be removed if we know the grounded mask for each operation.
Its full algorithm is described in the Alg.~\ref{alg:operation_planning_local}.
We use UPSNet~\cite{xiong2019upsnet} to obtain the mask candidates and use~\cite{nazeri2019edgeconnect} for removing/inpainting operation.
Given each operation with its region, it could also train our T2ONet augmented with the grounding model.
Since this paper focuses on global operation planning, it will be left for future work.
\begin{algorithm}[]
  \label{alg:operation_planning_local}
\SetAlgoLined
\LinesNumbered
\KwIn{$I_0$, $I_{g}$, max operation step $N$, threshold $\epsilon$, beam size $B$, operation set $\mathcal{O}$, mask set $\mathcal{M}$}
$p$=$[I_0]$\\
$cost(I) = ||I - I_g||_1$\\
\For{$t$ in $1:N$}{
  $q \leftarrow [\ ]$\\
  \For{$I \in p$}{
    \For{$o \in \mathcal{O}$}{
      \For{$M \in \mathcal{M}$}{
        $\alpha^* = \arg\min_\alpha cost(o(I, \alpha)\odot M + I\odot(1-M))$\\
        $I^* \leftarrow o(I, \alpha^*)\odot M + I\odot(1-M)$\\
        $q \leftarrow q\cup I^*$\\
      }
    }
  }
  $q\leftarrow Sort(q), sort key=cost(I^*)$ \\
  $p = q[:B]$\\
  \For{$I \in p$}{
    \If{$cost(I) < \epsilon$}{
      Break All Loop
    }
  }
}
$\{o_t\}, \{\alpha_t\}, \{M_t\}, \{I_t\}\leftarrow Backtracking(p)$

\Return $\{o_t\}, \{\alpha_t\}, \{M_t\}, \{I_t\}$
 \caption{Operation Planning with Local Editing}
\end{algorithm}

\begin{figure*}[!tp] 
  \centering\includegraphics[width=2\columnwidth]{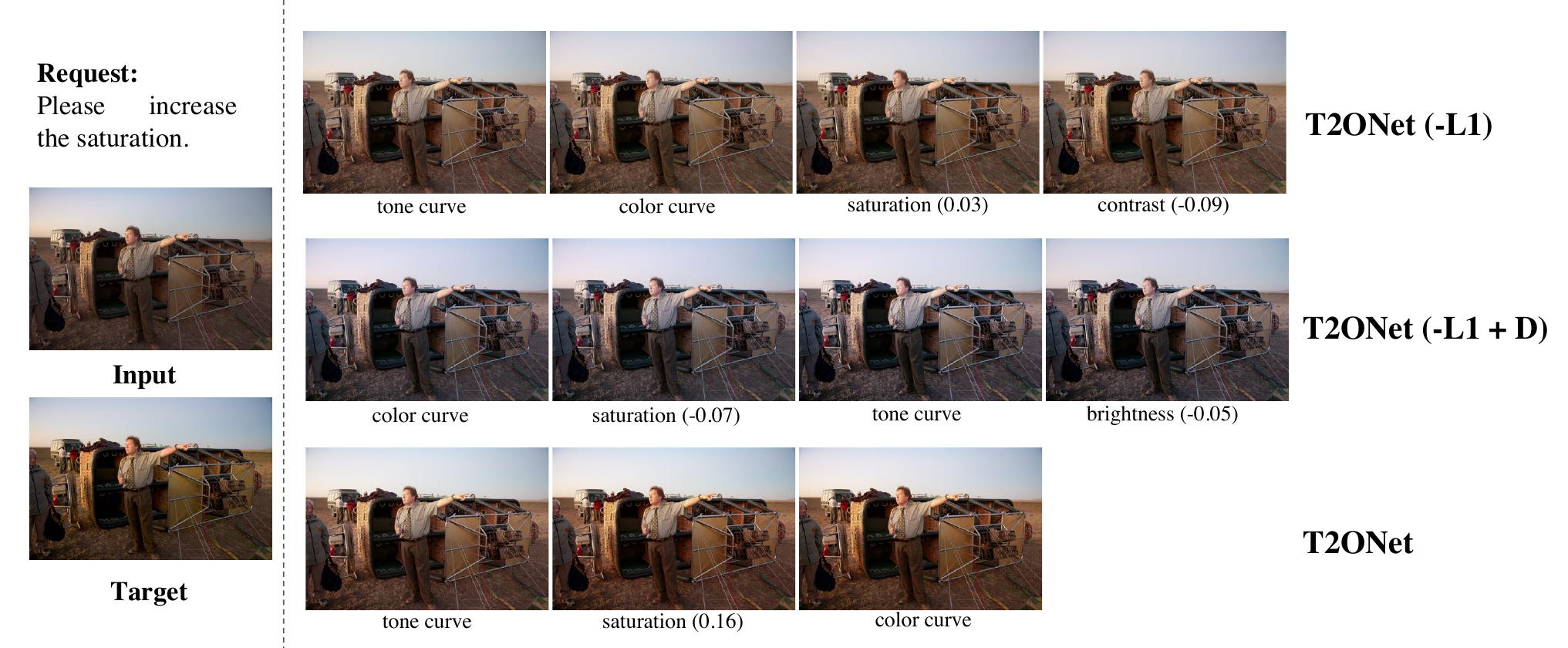} 
  \caption{Visualization for ablation study methods. T2ONet(-L1) is the modified version without L1 loss, T2ONet(-L1+D) is to replace the L1 loss to adversarial loss.} 
  \label{fig:vis_ablation}
\end{figure*} 

\begin{figure}[!tp] 
  \centering\includegraphics[width=\columnwidth]{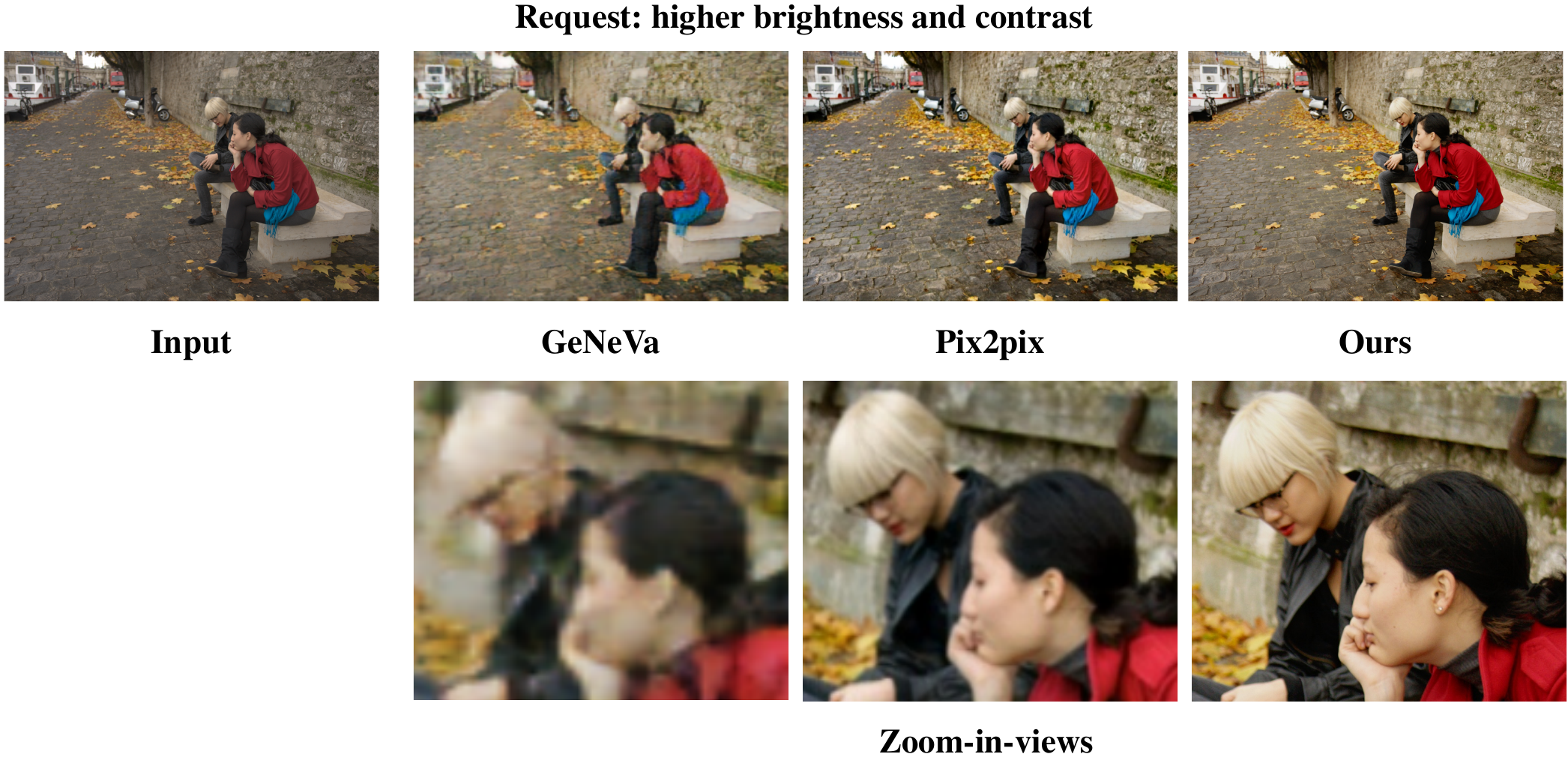} 
  \caption{Compared with the GAN-based method GeNeVa and Pix2pixAug, although all the methods conduct the correct editing, our method has no pixel distortion and is independent to image resolution.} 
  \label{fig:vis_hr}
\end{figure} 

\begin{figure}[!tp] 
  \centering\includegraphics[width=\columnwidth]{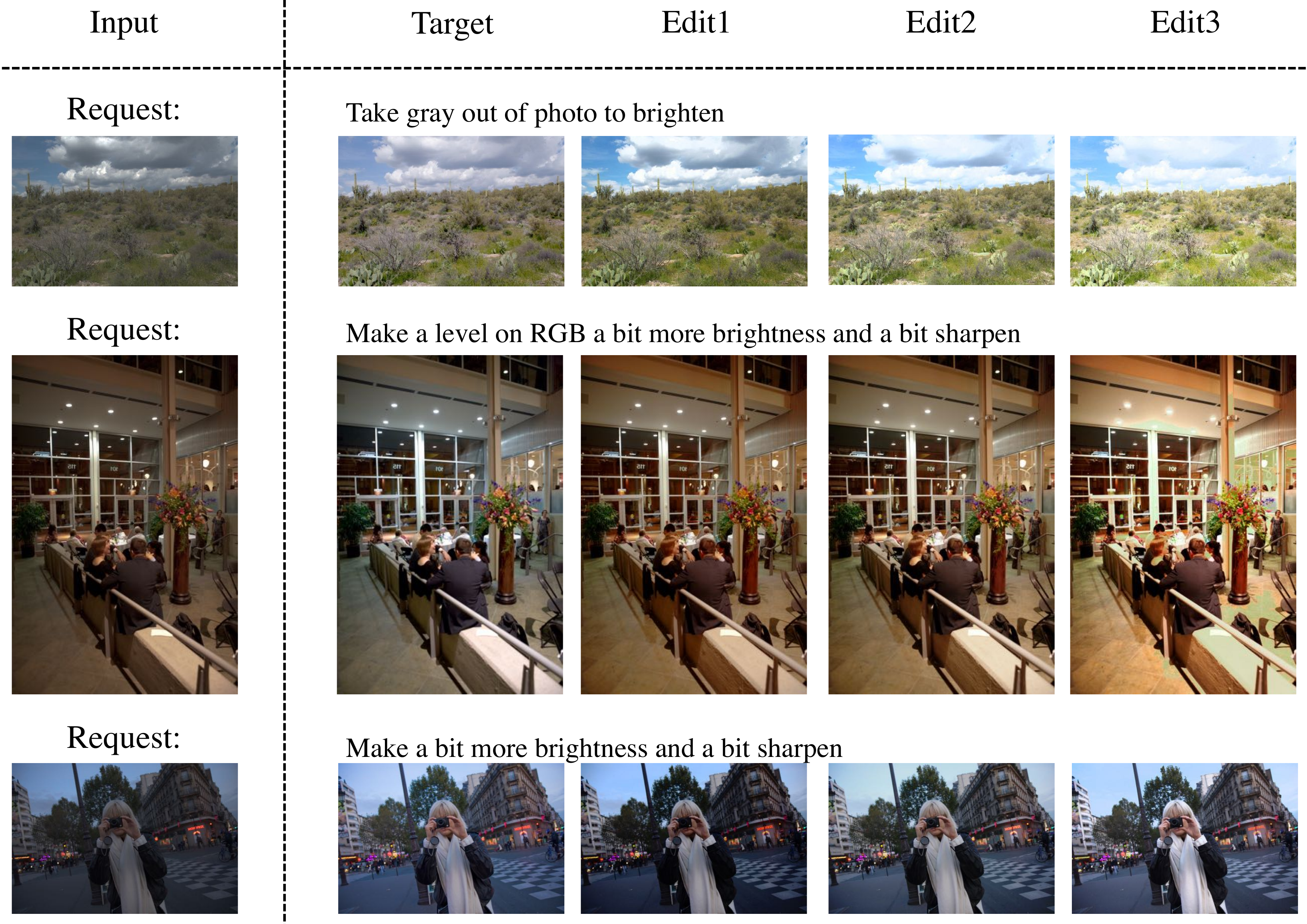} 
  \caption{Visualization for diversified output given the same input and request by sampling the operation parameter at inference stage.} 
  \label{fig:vis_diverse}
\end{figure}

\begin{table}[]
  \centering
  \ra{1.2}
  \begin{tabular}{@{}lrrr@{}}
  \toprule
  Method & Planning (s) & Train (s) & Test (s)\\ 
  \midrule
  Pix2pixAug~\cite{wang2018learning} & 18.58 & 0.37 & 0.05 \\ 
  T2ONet & - & 1.16 & 0.16 \\ 
  \bottomrule
  \end{tabular}
  \caption{The average running time comparison for GAN-based method Pix2pixAug~\cite{wang2018learning} and our method. The training time is computed in batch size 64, and test and planning time are computed in batch size 1.}
  \label{tab:time_comparison}
  \vspace{-3mm}
\end{table}

\section{Time Analysis}
We compare the running time of T2ONet and Pix2pixAug~\cite{wang2018learning} in Tab.~\ref{tab:time_comparison}.
For T2ONet, the computing-intensive planning is a pre-processing step and only needs to be computed once, and our model shows faster train and test speed than Pix2pixAug, indicating that our method not only has better editing quality, but also is computational cheaper.

\section{More advantages of T2ONet}
\subsection{Resolution independent editing} \label{appx:res_independ}
Our model will conduct resolution-independent editing and can produce the output with the same resolution as the input image. 
However, the GAN-based method suffers from generating high-resolution images, such comparison is shown in Fig.~\ref{fig:vis_hr}.

\begin{figure*}[]
  \centering
  \includegraphics[width=2\columnwidth]{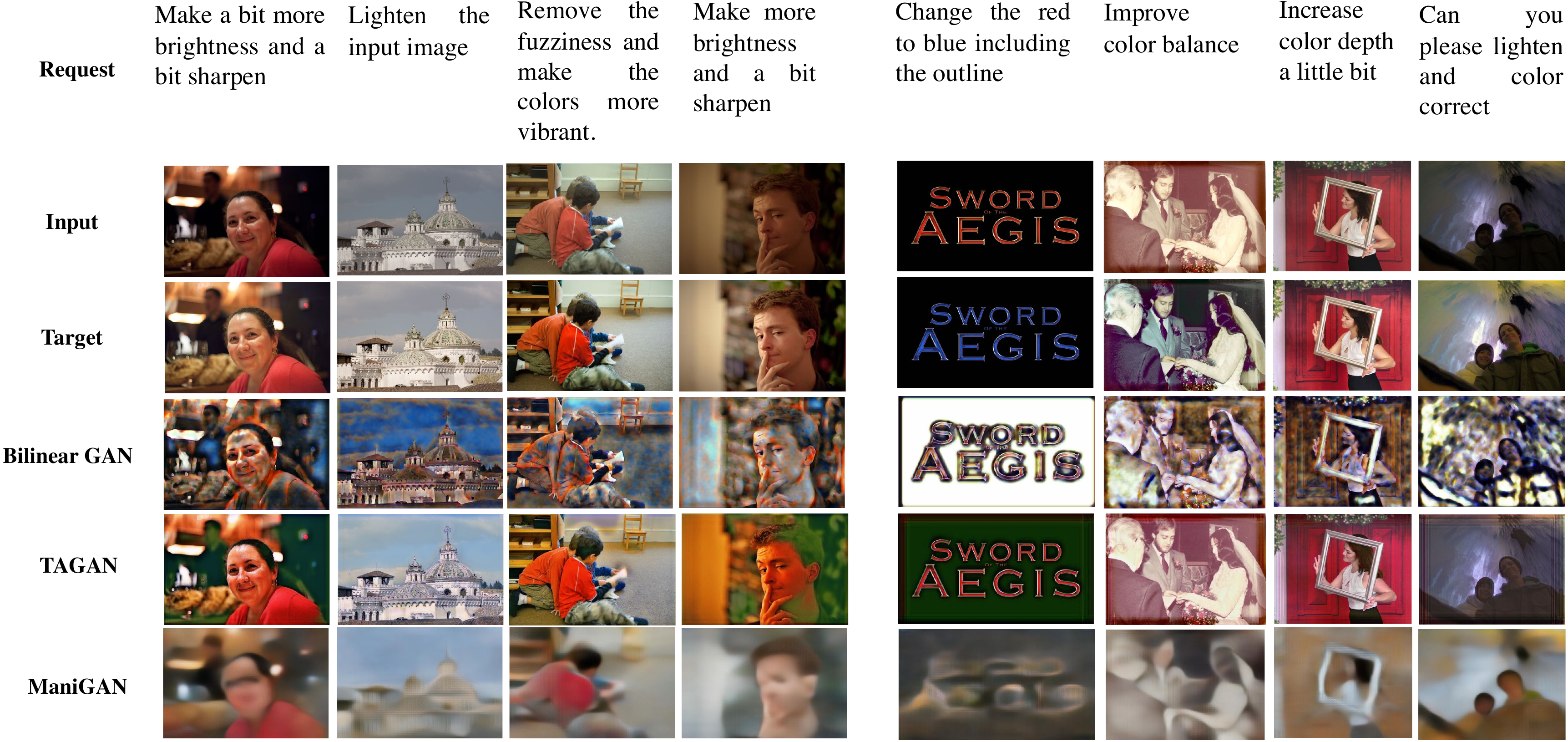}
  \caption{The comparison results for BilinearGAN, TAGAN and ManiGAN on MA5k-Req (left) and GIER (right) datasets} 
  \label{fig:vis_comp_exp_more}
\end{figure*}

\subsection{Inference with multiple possible output}
We have discussed the trade-off between L1 and variance by sampling the operation parameter during training stage, and such variance is measured over the outputs edited from different requests, with the purpose of indicating the model's language-sensitivity
However, our model can even generate multiple output given the same request by sampling the operation parameter at the inference stage, whose result is shown in Fig.~\ref{fig:vis_diverse}.

\section{More visual results}
\subsection{Comparison Methods} \label{appx:vis_comp}
Here we show the comparison visual result of BilinearGAN, TAGAN and ManiGAN in Fig~\ref{fig:vis_comp_exp_more}. The visual results for ManiGAN is quite blur, and its L1, SSIM, FID are 0.1398, 0.5177, 157.4145 on MA5k-Req and 0.1834, 0.4938, 234.6784 on GIER, respectively. Therefore we did not do user study for this method.  

\subsection{T2ONet}
More visual results for T2ONet on MA5k-Req and GIER are shown in Fig.~\ref{fig:vis_res_FiveK} and Fig.~\ref{fig:vis_res_GIER}, respectively.

\subsection{Operation Planning}
More visualization of the operation editing process is shown in Fig.~\ref{fig:vis_plan_more}

\begin{figure*}[]
  \centering
  \includegraphics[width=2\columnwidth]{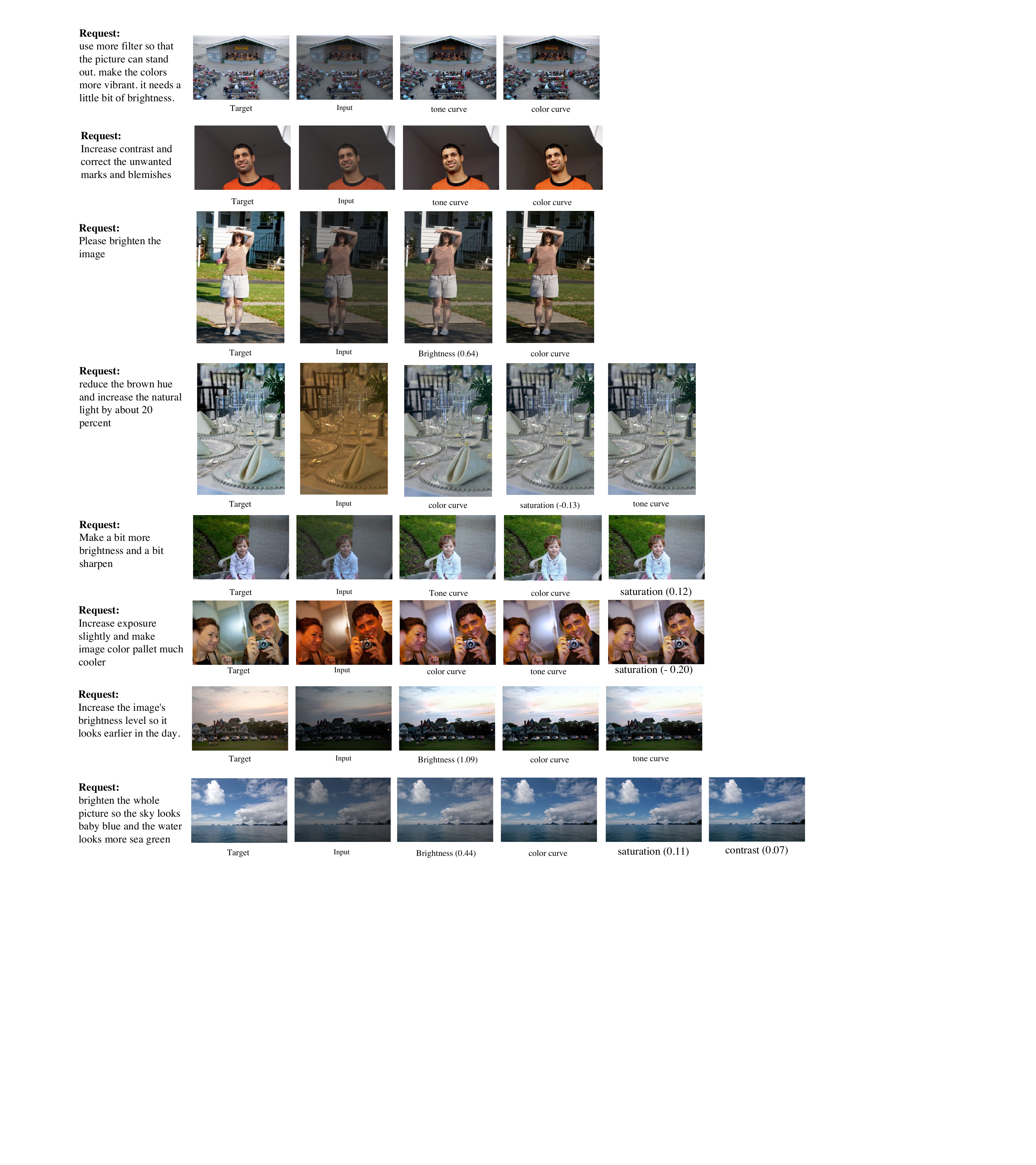}
  \caption{The visual results for T2ONet on MA5k-Req dataset.} 
  \label{fig:vis_res_FiveK}
\end{figure*}

\begin{figure*}[]
  \centering
  \includegraphics[width=2\columnwidth]{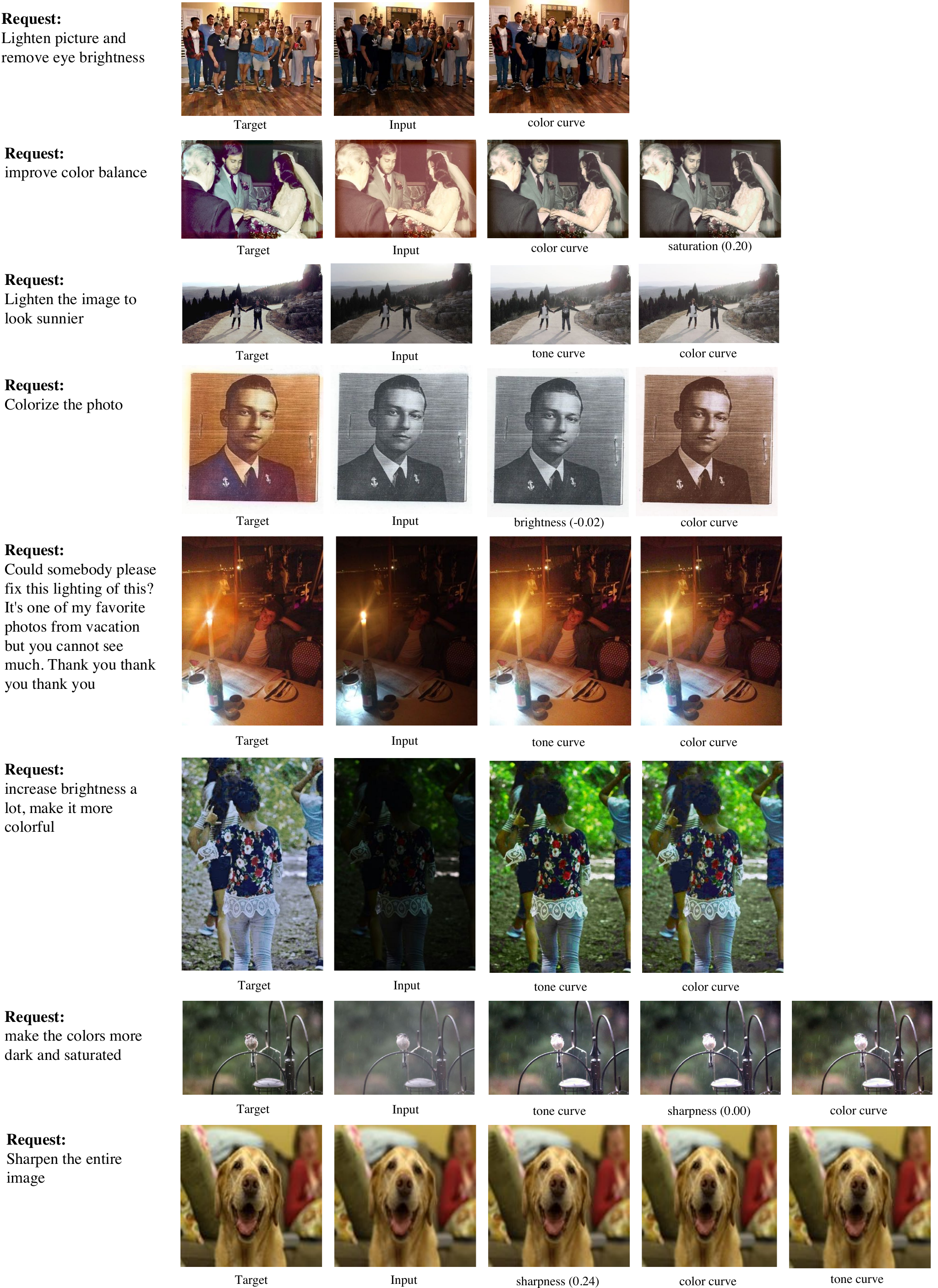}
  \caption{The visual results for T2ONet on GEIR dataset.} 
  \label{fig:vis_res_GIER}
\end{figure*}

\begin{figure*}[]
  \centering
  \includegraphics[width=2\columnwidth]{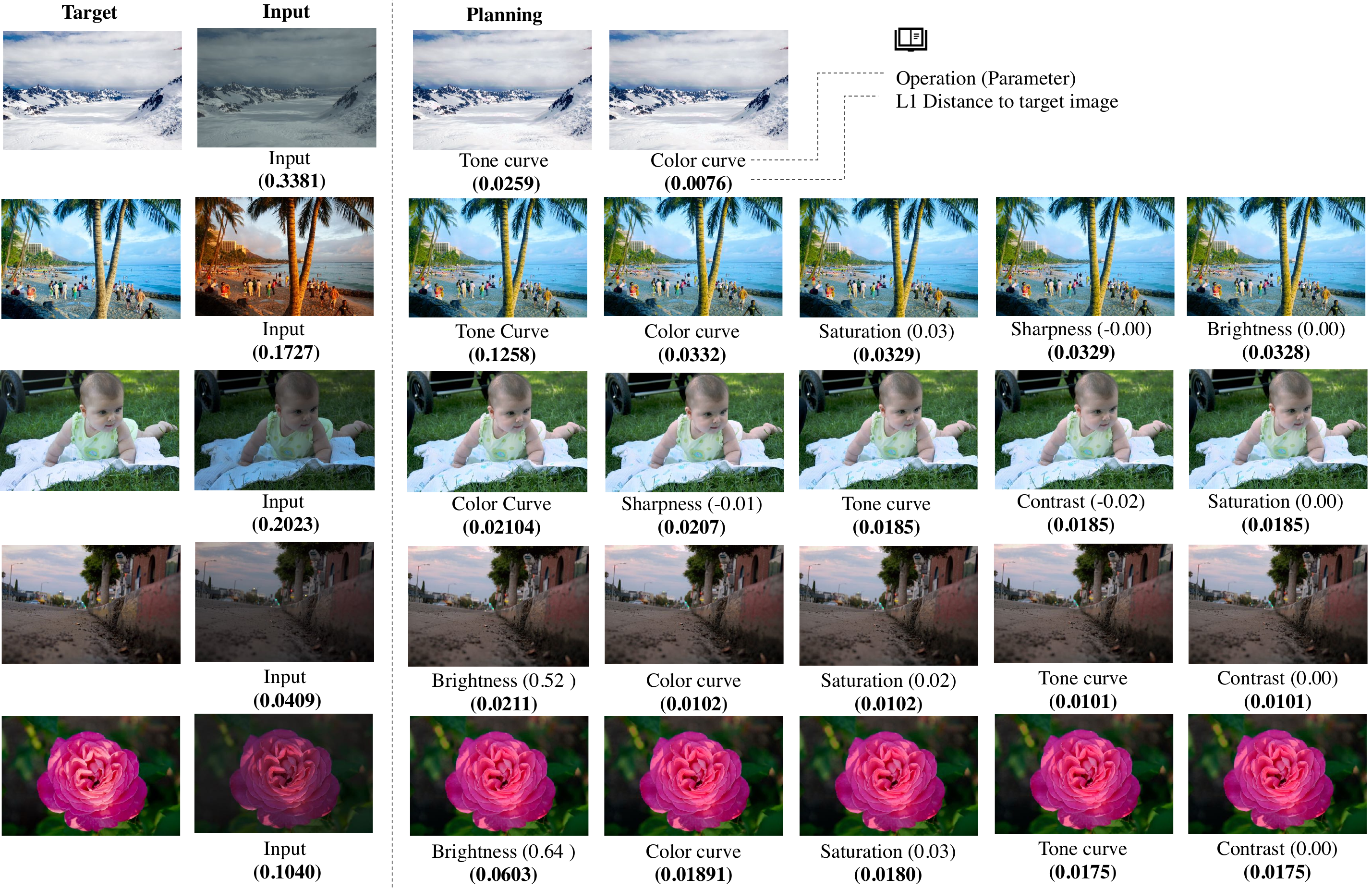}
  \caption{The visual results for operation planning.} 
  \label{fig:vis_plan_more}
\end{figure*}

\section{More Experiment Implementation Details}\label{appx:imple_detail}
Training images are resized to $128\times 128$, and test/val images are resized to short edge 600 with aspect ratio unchanged.
The pixel value is normalized to 0-1

For T2ONet, ResNet18~\cite{he2016deep} is used to encoding image into a 512-d feature.
The word and operation embedding is 300-d, and the word embedding is initialized by GloVe.
Two-layer bi-LSTM with feature with hidden size 256 is used to encode the language request, and two-layer LSTM decoder has hidden size 512. All the other FC layers output with a 512-d feature.

For operation planning, we adopt Nelder-Mead~\cite{nelder1965simplex} for parameter optimization.
And, for lanugage-guided image editing, the training is alternatively in two losses.
For odd iteration, we only optimize $\mathcal{L}_o$ and $\mathcal{L}_\alpha$ in a teacher forcing fashion. 
For even iteration, we only optimize $\mathcal{L}_{L1}$ using the previously generated action and image as the input for the next state.
We take the top-1 operation with its parameters every step.
The final image-level $\mathcal{L}_{L1}$ can backward propagate gradients to the weights of T2OCell other than the weights of the FC layer for prediction of the operation $o$.
Hence, in all ablation study of the T2ONet, we always need the loss of $\mathcal{L}_o$ to supervise the operation selection.
The model is trained on a single GPU with a 64 batch size.

\section{Operation Implementation Details} \label{appx:op_detail}
We adopt six operations: $\mathtt{brightness}$, $\mathtt{saturation}$, $\mathtt{contrast}$, $\mathtt{sharpness}$, $\mathtt{tone}$, and $\mathtt{color}$. The operation modular network is composed of these operations in a fixed order if they are needed. With the input image $I$, parameter $p$, and output image $I'$,
the implementation of operation submodules are illustrated as follows.
\subsection{Brightness and Saturation}
The hue, saturation, value in the HSV space of image $I$ are denoted as $H(I)$, $S(I)$, $V(I)$.
Here $p$ is an unbounded scalar. Let $V'(I) = \text{clip}((1 + p)\cdot V(I), 0, 1)$ and $S'(I) = \text{clip}((1 + p)\cdot S(I), 0, 1)$, the output image for brightness operation is 
\begin{equation}
    I' = \text{HSVtoRGB}(H(I), S(I), V'(I)),
\end{equation}
and the output image for saturation operation is 
\begin{equation}
    I' = \text{HSVtoRGB}(H(I), S'(I), V(I)).
\end{equation}
The $\text{HSVtoRGB}$ is a differentiable function mapping the RGB space to HSV space, implemented via Kornia~\cite{riba2020kornia}, and $\text{clip}(x, 0, 1)$ is a clip function to clip $x$ within $0$ to $1$.
\subsection{Contrast}
Contrast operation is controlled by a scalar parameter $p$, implemented following~\cite{hu2018exposure}.
First compute the luminance of image $I$ as 
\begin{equation}
    \text{Lum}(I) = 0.27 I_r + 0.67 I_g + 0.06 I_b,
\end{equation}
where $I_r$, $I_g$, $I_b$ are the RGB channels of $I$. 
The enhanced luminance is 
\begin{equation}
    \text{EnhancedLum}(I) = \frac{1}{2}(1 - \cos(\pi \cdot \text{Lum}(I))),
\end{equation}
and the image with enhanced contrast is 
\begin{equation}
    \text{EnhancedC}(I) = I\cdot \frac{\text{EnhancedLum}(I)}{\text{lum}(I)}.
\end{equation}
The output image $I'$ is the combination of the enhanced contrast and original image
\begin{equation}
    I' = (1-p)\cdot I + p\cdot \text{EnhancedC} (I).
\end{equation}

\subsection{Sharpness}
The sharpness operation is implemented by adding to the image with its second-order spatial gradient~\cite{gonzales2002digital}, expressed as 
\begin{equation}
    I' = I + p\Delta^2I,
\end{equation}
where $p$ is a scalar parameter and $(\Delta^2\cdot)$ is the Laplace operator over the spatial domain of the image. The Laplace operator is applied to each channel of the image.

\subsection{Tone and Color}
The tone and color operation follows curve representation~\cite{hu2018exposure}. The curve is estimated as piece-wise linear functions with $N$ pieces. The parameter $p=\{p_i\}_{i=0}^{M-1}$ is a vector of length $M$. With the input pixel $x\in [0, 1]$, the output pixel intensity is 
\begin{equation}
    f(x) = \frac{1}{Z}\sum_{i=0}^{N-1}\text{clip}(N x-i, 0, 1)p_i,
\end{equation}
where $Z=\sum_{i=0}^{N-1}p_i$. 
For tone operation, $N=M=8$, the same $f$ will apply to each of the RGB channels of the image $I$.
For color operation, three different $f$ are applied individually to each of RGB channels. Each $f(x)$ has $N=8$, leading to $M=3N=24$.

\begin{figure*}[!tp] 
  \centering\includegraphics[width=2\columnwidth]{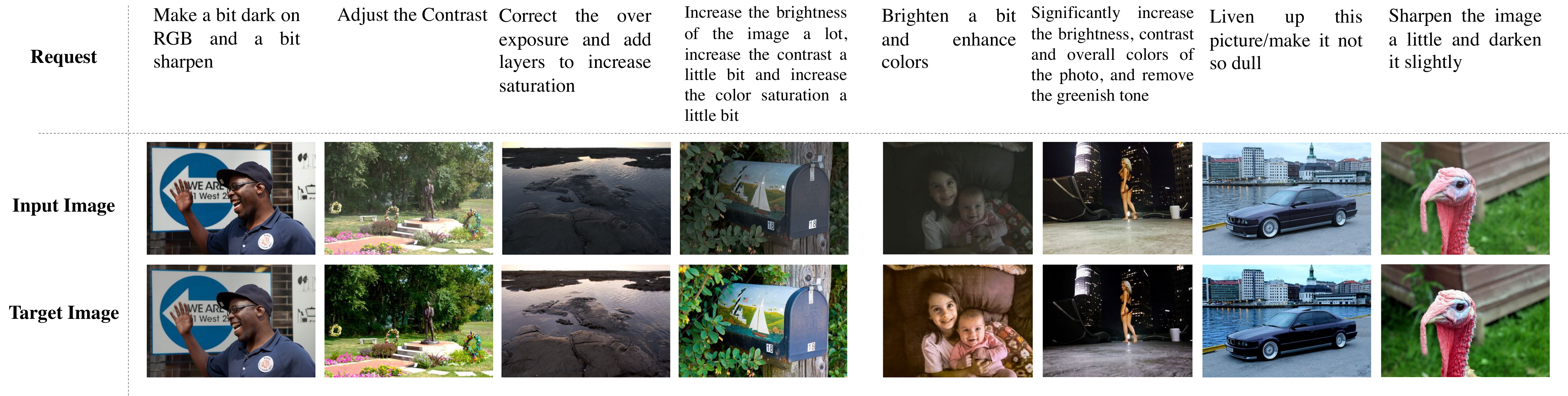} 
  \caption{Data examples draw from MA5k-Req (left) and GIER (right).} 
  \label{fig:dataset}
\end{figure*} 

\begin{figure*}[!tp] 
  \centering\includegraphics[width=2\columnwidth]{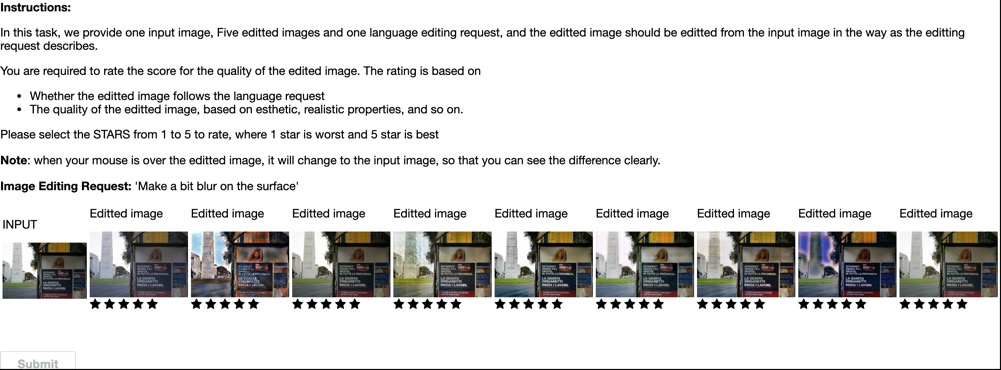} 
  \caption{The interface for user study. The edited result of all the methods are shown in random order. The worker should select the star under each edit to indicate the score they rate.} 
  \label{fig:interface}
\end{figure*}

\section{Languages for Image Variance Evaluation}\label{appx:lang}
The 10 different requests are as follows:
\begin{enumerate}[noitemsep, topsep=0pt]
\item Decrease the brightness.
\item Increase the brightness.
\item Enhance the color.
\item Decrease the color.
\item Improve contrast.
\item Reduce contrast.
\item Increase saturation.
\item Reduce saturation.
\item Increase the brightness a little.
\item Increase the brightness a lot.
\end{enumerate}

\section{Dataset}
\subsection{More Detail of MA5k-Req Collection Process}\label{appx:data_collect}
In this section we We show the worker the input and target images, and let workers write the editing request. We deploy the annotation collection interface on Amazon Mechanic Turk involving totally 268 workers for FiveK. Each request annotation is $0.03$, and we have the approvals for crowdsourcing.

We show the worker the input and target images, and let workers write the editing request. We deploy the annotation collection interface on Amazon Mechanic Turk involving totally 268 workers for FiveK and 197 workers for web images. Each request annotation is \$0.03.

For quality control, we initially collect the language requests for a subset of image pairs, and manually select good workers depending on the annotation quality. Then we only allow good workers to annotate the full dataset.
\subsection{Visualization of Dataset Samples}
Some samples draw from MA5k-Req and GIER is shown in Fig.~\ref{fig:dataset}

\section{User study details}
The interface of the user study is shown in Fig.~\ref{fig:interface}.

\newpage

{\small
\bibliographystyle{ieee_fullname}
\bibliography{egbib}
}

\end{document}